\theoremstyle{plain}
\newtheorem{theorem}{Theorem}[section]
\newtheorem{corollary}[theorem]{Corollary}
\theoremstyle{definition}
\theoremstyle{remark}
\newtheorem{remark}[theorem]{Remark}
\title{The Curse of Low Task Diversity: On the Failure of Transfer Learning to Outperform MAML and Their Empirical Equivalence}
 \author{
  Brando Miranda
  \\
  Department of Computer Science\\
  University of Illinois Urbana-Champaign\\
  Urbana, IL 61801 \\
  \texttt{miranda9@illinois.edu} \\
     \And 
    Patrick Yu
  \\
  Department of Computer Science \\
  University of Illinois Urbana-Champaign\\
  Urbana, IL 61801 \\
  \texttt{pzy@illinois.edu}
  \And
    Yu-Xiong Wang
  \\
  Department of Computer Science\\
  University of Illinois Urbana-Champaign\\
  Urbana, IL 61801 \\
  \texttt{yxw@illinois.edu}
  \And
  Sanmi Koyejo
  \\
  Department of Computer Science\\
  University of Illinois Urbana-Champaign,\\
  Urbana, IL 61801 \\
  \texttt{sanmi@illinois.edu}\\
}
\begin{document}


\maketitle


\begin{abstract}
Recently, it has been observed that a transfer learning solution might be all we need to solve many few-shot learning benchmarks -- thus raising important questions about when and how meta-learning algorithms should be deployed. 
In this paper, we seek to clarify these questions by 
1. proposing a novel metric -- the {\it diversity coefficient} -- to measure the diversity of tasks in a few-shot learning benchmark and 
2. by comparing Model-Agnostic Meta-Learning (MAML) and transfer learning under fair conditions (same architecture, same optimizer, and all models trained to convergence).
Using the diversity coefficient, we show that the popular \textit{mini}ImageNet and CIFAR-FS few-shot learning benchmarks have low diversity. 
This novel insight contextualizes claims that transfer learning solutions are better than meta-learned solutions in the regime of low diversity under a fair comparison. 
Specifically, we empirically find that a low diversity coefficient correlates with a high similarity between transfer learning and MAML learned solutions in terms of accuracy at meta-test time and classification layer similarity (using feature based distance metrics like SVCCA, PWCCA, CKA, and OPD). 
To further support our claim, we find this meta-test accuracy holds even as the model size changes. 
Therefore, we conclude that in the low diversity regime, MAML and transfer learning have equivalent meta-test performance when both are compared fairly.
We also hope our work inspires more thoughtful constructions and quantitative evaluations of meta-learning benchmarks in the future.
\end{abstract}

\section{Introduction}

The success of deep learning in computer vision \citep{alexnet, resnet}, natural language processing \citep{bert, gpt3}, game playing \citep{alphago, atari1, EfficientZero}, and more keeps motivating a growing body of applications of deep learning on an increasingly wide variety of domains.
In particular, deep learning is now routinely applied to few-shot learning -- a research challenge that assesses a model's ability to learn to adapt to new tasks, new distributions, or new environments.
This has been the main research area where meta-learning algorithms have been applied -- since such a strategy seems promising in a small data regime due to its potential to \textit{learn to learn } or \textit{learn to adapt}.
However, it was recently shown \citep{rfs} that a transfer learning model with a fixed embedding can match and outperform many modern sophisticated meta-learning algorithms on numerous few-shot learning benchmarks \citep{Chen2019, Chen, Dhillon2019, Huang2019}.
This growing body of evidence -- coupled with these surprising results in meta-learning -- raises the question if researchers are applying meta-learning with the right inductive biases \citep{inductive_bias, ShaiShalevShwartz2014} and designing appropriate benchmarks for meta-learning.
Our evidence suggests this is not the case.

In this work, we show that when the task diversity -- a novel measure of variability across tasks -- is low, then MAML (Model-Agnostic Meta-Learning) \citep{maml} learned solutions have the same accuracy as transfer learning (i.e., a supervised learned model with a fine-tuned final linear layer).
We want to emphasize the importance of doing such an analysis fairly: with the same architecture, same optimizer, and all models trained to convergence.
We hypothesize this was lacking in previous work \citep{Chen2019, Chen, Dhillon2019, Huang2019}.
This empirical equivalence remained true even as the model size changed -- thus further suggesting this equivalence is more a property of the data than of the model. 
Therefore, we suggest taking a problem-centric approach to meta-learning and suggest applying Marr's level of analysis \citep{marr_for_ml, marr1982} to few-shot learning -- to identify the family of problems suitable for meta-learning.
Marr emphasized the importance of understanding the computational problem being solved and not only analyzing the algorithms or hardware that attempts to solve them.
An example given by Marr is marveling at the rich structure of bird feathers without also understanding the problem they solve is flight.
Similarly, there has been analysis of MAML solutions and transfer learning without putting the problem such solutions should solve into perspective \citep{Raghu, rfs}.
Therefore, in this work, we hope to clarify some of these results by partially placing the current state of affairs in meta-learning from a problem-centric view.
In addition, an important novelty of our analysis is that we put analysis of intrinsic properties of the data as the driving force.

\textbf{Our contributions} are summarized as follows:
\begin{enumerate}
    \item \textit{We propose a novel metric that quantifies the \textbf{intrinsic diversity} of the data of a few-shot learning benchmark}. 
    We call it the \textit{diversity coefficient}. 
    It enables analysis of meta-learning algorithms through a \textit{problem-centric framework}.
    It also goes beyond counting the number of classes or number of data points or counting the number of concatenated data sets -- and instead quantifies the expected diversity/variability of tasks in a few-shot learning benchmark.
    
    \item \textit{We analyze the two most prominent few-shot learning benchmarks -- MiniImagenet and Cifar-fs -- and show that their diversity is low.}
    These results are robust across different ways to measure the diversity coefficient, suggesting that our approach is robust. 
    In addition, we quantitatively also show that the tasks sampled from them are highly homogeneous.
    
    \item With this context, we partially clarify the surprising results from \citep{rfs} by comparing their transfer learning method against models trained with MAML \citep{maml}.
    \textit{In particular, when making a fair comparison, the transfer learning method with a fixed feature extractor fails to outperform MAML.}
    We define a fair comparison when the two methods are compared using the same architecture (backbone), same optimizer, and all models trained to convergence.
    We also show that their final layer makes similar predictions according to neural network distance techniques like distance based Singular Value Canonical Correlation Analysis (SVCCA), Projection Weighted (PWCCA), Linear Centered Kernel Analysis (LINCKA), and Orthogonal Procrustes Distance (OPD). 
    This equivalence holds even as the model size increases.
    
    \item Interestingly, we also find that even in the regime where task diversity is low (in MiniImagenet and Cifar-fs), the features extracted by supervised learning and MAML are different -- implying that the mechanism by which they function is different despite the similarity of their final predictions.
    
    \item \textit{As an actionable conclusion, we provide a metric that can be used to analyze the intrinsic diversity of the data in a few-shot learning benchmarks and therefore build more thoughtful environments to drive research in meta-learning}.
    In addition, our evidence suggests the following test to predict the empirical equivalence of MAML and transfer learning: 
    if the task diversity is low, then transfer learned solutions might fail to outperform meta-learned solutions. 
    This test is easy to run because our diversity coefficient can be done using the Task2Vec method \citep{task2vec} using pre-trained neural network.
    In addition, according to our synthetic experiments that also test the high diversity regime, this test provides preliminary evidence that the diversity coefficient might be predictive of the difference in performance between transfer learning and MAML.
    
\end{enumerate}

We hope that this line of work inspires a problem-centric first approach to meta-learning -- which appears to be especially sensitive to the properties of the problem in question. 
Therefore, we hope future work takes a more thoughtful and \textbf{quantitative} approach to benchmark creation -- instead of focusing only on making huge data sets.

\section{Background}\label{background}

In this section, we provide a summary of the background needed to understand our main results.

\textbf{Model-Agnostic Meta-Learning (MAML):}
The MAML algorithm \citep{maml} attempts to meta-learn an initialization of parameters for a neural network so that it is primed for fast gradient descent adaptation. 
It consists of two main optimization loops: 
1) an outer loop used to prime the parameters for fast adaptation, and 
2) an inner loop that does the fast adaptation.
During meta-testing, only the inner loop is used to adapt the representation learned by the outer loop.

\textbf{Transfer Learning with Union Supervised Learning (USL):}
Previous work \citep{rfs} shows that an initialization trained with supervised learning, on a union of all tasks, can outperform many sophisticated methods in meta-learning.
In particular, their method consists of two stages:
1) first they use a union of all the labels in the few-shot learning benchmark during meta-training and train with standard supervised learning (SL),
then 2) during the meta-testing, they use an inference method common in transfer learning: extract a fixed feature from the neural network and fully fine-tune the final classification layer (i.e., the head).
Note that our experiments only consider when the final layer is regularized Logistic Regression trained with LBGFS (Limited-memory Broyden–Fletcher–Goldfarb–Shanno algorithm).

\textbf{Distances for Deep Neural Network Feature Analysis:}
To compute the distance between neural networks we use the distance versions of Singular Value Canonical Correlation Analysis (SVCCA) \citep{svcca}, Projection Weighted Canonical Correlation (PWCCA) \citep{pwcca}, Linear Centered Kernel Analysis (LINCKA) \citep{cka} and Orthogonal Procrustes Distance (OPD) \citep{opd}.
These distances are in the interval $[0, 1]$ and are not necessarily a formal distance metric but are guaranteed to be zero when their inputs are equal and nonzero otherwise.
This is true because SVCCA, PWCCA, LINCKA are based on similarity metrics and OPD is already a distance.
Note that we use the formula $d(X, Y) = 1 - sim(X, Y)$ for our distance metrics where $sim$ is one either SVCCA, PWCCA, LINCKA similarity metric and $X, Y$ are matrices of activations (called layer matrices).
The distance between two models is computed by choosing a layer and then comparing the features/activations after adaptation for that layer given a batch of tasks represented as a support and query set. 
A more thorough overview of these metrics for the analysis of internal representations for convolutional neural networks (CNNS) can be found in the appendix, section \ref{cca_background}.



\textbf{Task2Vec Embeddings for Distance computation between Tasks:}
The diversity coefficient we propose is the expectation of the distance between tasks (explained in more detail in section \ref{def_div}).
Therefore, it is essential to define the distance between different pairs of tasks.
We focus on the cosine distance between Task2Vec (vectorial) embeddings as in \citep{task2vec}.
Therefore, we provide a summary of the Task2Vec method to compute task embeddings.
The vectorial representation of tasks provided by Task2Vec \citep{task2vec} is the vector of diagonal entries of the Fisher Information Matrix (FIM) given a fix neural network as a feature extractor -- also called a \textbf{probe network} -- after fine-tuning the final classification layer to the task. 
The authors explain this is a good vectorial representation of tasks because 
1. It approximately indicates the most informative weights for solving the current task (up to a second order approximation)
2. For rich probe networks like CNNs, the diagonal is more computationally tractable.
We choose Task2Vec because the original authors provide extensive evidence that their embeddings correlate with semantic and taxonomic relations between different visual classes -- making it a convincing embedding for tasks \citep{task2vec}. 
The Task2Vec embedding of task $\tau$ is the diagonal of the following matrix:
\begin{equation}\label{fim}
    \hat F_{D_\tau, f_w} =
        \hat F(D_\tau, f_w ) =
            \mathbb E_{x, y \sim \hat p(x \mid \tau) p(y \mid x, f_w)}[ \nabla_w \log p(y \mid x, f_w) \nabla_w p(y \mid x, f_w)^{\top}] 
\end{equation}
where $f_w$ is the neural networks used as a feature extractor with architecture $f$ and weights $w$, 
$\hat p(x \mid \tau)$ is the empirical distribution defined by the training data $D_{\tau} = \{ (x_i, y_i ) \}^n_{i=1}$ for task $\tau$,
and $p(y \mid x, f_w)$ is a deep neural network trained to approximate the (empirical) posterior $\hat p(y \mid x, \tau)$.
We'd like to emphasize that the there is a dependence on target label since Task2Vec fixes the feature extractor (using $f_w$) and then fits the final layer (or ``head") to approximate the task posterior distribution $\hat p(y \mid x, \tau)$.
In addition, it's important to have a fixed probe network to make different embeddings comparable \citep{task2vec}.

\section{Definition of the Diversity Coefficient}\label{def_div}

The diversity coefficient aims to measure the intrinsic diversity (or variability) of tasks in a few-shot learning benchmark.
At a high level, the diversity coefficient is the expected distance between a pair of different tasks \textbf{given a fixed probe network}.
In this work, we choose the distance to be the cosine distance between vectorial representations (i.e. embeddings) of tasks according to Task2Vec as described in section \ref{background}.
Using a fixed probe networks is essential because: 
1. Using a fixed probe network means that the distances between different tasks are comparable, as discussed in the original Task2Vec \citep{task2vec} and
2. Since we are computing the distance between different tasks, we need to make sure the difference comes from intrinsic properties of the data and not from a different source, e.g. if one uses different models then this might confound the source of variability in our metric.
We define the \textbf{diversity coefficient} of a few-shot learning benchmark $B$ as follows:
\begin{equation}\label{true_diversity_coeff1}
    \hat div(B) = 
        {\mathbb E}_{\tau_1 \sim \hat p(\tau \mid B), \tau_2 \sim \hat p(\tau \mid B)}
            {\mathbb E}_{D_1 \sim \hat p(x_1, y_1 \mid \tau_1), D_2 \sim \hat p(x_2, y_2 \mid \tau_2) } 
            \left[ 
                d(\hat F_{D_1, f_w}, \hat F_{D_2, f_w} ) 
            \right]
\end{equation}
where $f_w$ is the neural networks used as a feature extractor with architecture $f$ and weights $w$, 
$\hat p(x \mid \tau)$ is the empirical distribution defined by the training data $D_{\tau} = \{ (x_i, y_i ) \}^n_{i=1}$ for task $\tau$,
$\tau_1, \tau_2$ are tasks sampled from the empirical distribution of tasks $\hat p(\tau \mid B)$ for the current benchmark $B$ (i.e. a batch of tasks with their data sets $\mathcal D = {(\tau_i, D_{\tau_i}) }^{N}_{i=1}$),
a task $\tau_i$ is the probability distribution $p(x, y \mid \tau)$ of the data,
$d$ is a distance metric (for us cosine),
$f_w$ is the neural networks used as a feature extractor with architecture $f$ and weights $w$,
and $\hat p(x \mid \tau)$ is the empirical distribution defined by the training data $D_{\tau} = \{ (x_i, y_i ) \}^n_{i=1}$ for task $\tau$.
We'd also like to recall the reader that the definition of a task in this setting is of a n-way, k-shot few-shot learning task.
Therefore, each task has n classes sampled with k examples used for the adaptation.
We'd like to emphasize that the adaptation here is only to fine-tune the final layer according to the Task2Vec method for the correct computation of the FIM.
Therefore, in this setting we combine the support and query set as the split is not relevant for the computation of the task embedding using Task2Vec.
Note that the above formulation can be easily adapted to any distance function between tasks, and is not necessarily specific to using the FIM or cosine distance.
For example, given the true distributions for tasks one can use real distances between probability distributions e.g., Hellinger distance.
In addition, it is obvious one can use a distance function besides the cosine distance -- but we choose it in accordance to the original work of Task2Vec \citep{task2vec}.


\section{Experiments}

This section explains the experiments backing up our main results outlined in our list of contributions.
Experimental details are provided in the supplementary section \ref{experimental_details} and the learning curves displaying the convergence for a fair comparison are in supplementary section \ref{fair_comparison}.

\subsection{The Diversity Coefficient of MiniImagenet and Cifar-fs}

To put our analysis into a problem-centric framework, we first analyze the problem to be solved through the lenses of the diversity coefficient.
Recall that the diversity coefficient aims to quantify the intrinsic variation (or distance) of tasks in a few-shot learning benchmark.
We show that the diversity coefficient of the popular MiniImagenet and Cifar-fs benchmarks are low with good confidence intervals using four different probe networks in table \ref{div_mi_table}.
We argue it's low because the diversity values are approximately in the interval $[0.06,0.117]$ -- given that the minimum and maximum values would be $0.0$ and $1.0$ for the cosine distance.
In addition, the individual distances between pairs of tasks are low and homogenous, as shown in the heat maps in the supplementary section \ref{heat_maps}, figures \ref{heat_map_resnets_divs_mi} and \ref{heat_map_resnets_divs_cifarfs}.

\begin{table}[!h]
\centering
\begin{tabular}{lll}
\toprule
 Probe Network &        Diversity on MI &  Diversity on Cifar-fs \\
\midrule
  Resnet18 (pt) & 0.117  $\pm$  2.098e-5 & 0.100  $\pm$  2.18e-5 \\
Resnet18 (rand) & 0.0955  $\pm$  1.29e-5 &  0.103  $\pm$  1.05e-5 \\
  Resnet34 (pt) & 0.0999  $\pm$  1.95e-5 & 0.0847  $\pm$  3.06e-5 \\
Resnet34 (rand) & 0.0620  $\pm$  8.12e-6 & 0.0643  $\pm$  9.64e-6 \\
\bottomrule
\end{tabular}
\caption{
\textbf{
The diversity coefficient of MiniImagenet (MI) and Cifar-fs is low.}
The diversity coefficient was computed using the cosine distance between different standard 5-way, 20-shot classification tasks from the few-shot learning benchmark using the Task2Vec method described in section \ref{def_div}. 
We used 20 shots (number of examples per class) since we can use the whole task data to compute the diversity coefficient (no splitting of support and query set required for the diversity coefficient).
We used Resnet18 and Resnet34 networks as probe networks -- both pre-trained on ImageNet (indicated as ``pt" on table) and randomly initialized (indicated as ``rand" on table).
We observe that both type of networks and weights give similar diversity results.
All confidence intervals were at 95\%. 
To compute results, we used 500 few-shot learning tasks and only compare pairs of different tasks.
This results in $(500^2 - 500)/2 = 124,750$ pair-wise distances used to compute the diversity coefficient.  
}
\label{div_mi_table}
\end{table}

\subsection{Low Diversity Correlates with Equivalence of MAML and Transfer Learning}\label{mini_imagenet_sl_vs_ml}

Now that we have placed ourselves in a problem-centric framework and shown the diversity coefficient of the popular MiniImagenet and Cifar-fs benchmarks are low -- we proceed to show the failure of transfer learning (with USL) to outperform MAML.
Crucially, the analysis was done using a fair comparison: using the same model architecture, optimizer, and training all models to convergence -- details in section \ref{experimental_details}.
We used the five-layer CNN used in \citep{maml, ravi} and Resnet12 as in \citep{rfs}.
We provide evidence that in the setting of low diversity:
\begin{enumerate}
    \item The accuracy of an adapted MAML meta-learner vs. an adapted USL pre-trained model are similar and statistically significant, except for one result where transfer learning with USL is worse. 
    This is shown in table \ref{sl_vs_maml_accuracy_comparison_table} and \ref{all_archs_all_data_sets_maml_vs_tl_mi_cirfarfs_5cnn_resnet12rfs_perf_comp_bar_fig}.
    \item The distance for the classification layer decreases sharply according to four distance-based metrics -- SVCCA, PWCCA, LINCKA, and OPD -- as shown in figure \ref{prediction_adapted_sl_vs_adapted_maml}.
    This implies the predictions of the two are similar.
\end{enumerate}

For the first point, we emphasize that tables \ref{div_mi_table} and table \ref{sl_vs_maml_accuracy_comparison_table} taken together support our central hypothesis:
that models trained with meta-learning are not inferior to transfer learning models (using USL) when the diversity coefficient is low.
Careful inspection reveals that the methods have the same meta-test accuracy with intersecting confidence intervals -- making the results statistically significant across few-shot benchmarks and architectures. 
The one exception is the third set of bar plots, where transfer learning with USL is in fact worse.

For the second point, refer to figure \ref{prediction_adapted_sl_vs_adapted_maml} and observe that as the depth of the network increases, the distance between the activation layers of a model trained with MAML vs USL increases until it reaches the final classification layer -- where all four metrics display a noticeable dip.
In particular, PWCCA considers the two prediction layers identical (approximately zero distance).
This final point is particularly interesting because PWCCA is weighted according to the CCA weights that stabilize with the final predictions of the network.
This means that the PWCCA distance value is reflective of what the networked actually learned and gives a more reliable distance metric (for details, refer to the appendix section \ref{pwcca}).
This is important because this supports our main hypothesis: that at prediction time there is an equivalence between transfer learning and MAML when the diversity coefficient is low.

\begin{table*}[h!]
\begin{tabular}{lll}
\toprule
Meta-train Initialization &   Adaptation at Inference & Meta-test Accuracy \\
\midrule
                   Random &             no adaptation &    19.3 $\pm$ 0.80 \\
                    MAML0 &             no adaptation &    20.0 $\pm$ 0.00 \\
                      USL &             no adaptation &    15.0 $\pm$ 0.26 \\
                      \midrule
                   Random &          MAML5 adaptation &    34.2 $\pm$ 1.16 \\
                    \textbf{MAML5} &          \textbf{MAML5 adaptation} &    \textbf{62.4 $\pm$ 1.64} \\
                      USL &          MAML5 adaptation &    25.1 $\pm$ 0.98 \\
                      \midrule
                   Random &         MAML10 adaptation &    34.1 $\pm$ 1.23 \\
                    \textbf{MAML5} &         \textbf{MAML10 adaptation} &    \textbf{62.3 $\pm$ 1.50} \\
                      USL &         MAML10 adaptation &    25.1 $\pm$ 0.97 \\
                      \midrule
                   Random & Adapt Head only (with LR) &    40.2 $\pm$ 1.30 \\
                    MAML5 & Adapt Head only (with LR) &    59.7 $\pm$ 1.37 \\
                      \textbf{USL} & \textbf{Adapt Head only (with LR)} &    \textbf{60.1 $\pm$ 1.37} \\
\bottomrule
\end{tabular}
\centering
\caption{
\textbf{
MAML trained representations and supervised trained representation have statistically equivalent meta-test accuracy on MiniImagenet -- which has low diversity.}
The transfer model's adaptation is labeled as ``Adapted Head only (with LR)" -- which stands for ``Logistic Regression (LR)" used in \citep{rfs}.
More precisely, we used Logistic Regression (LR) with LBFGS with the default value for the l2 regularization parameter given by Python's Sklearn.
Note that an increase in inner steps from 5 to 10 with the MAML5 trained model does not provide an additional meta-test accuracy boost, consistent with previous work \citep{Miranda2020}.
Note that the fact that the MAML5 representation matches the USL representation when both use the same adaptation method is not surprising -- given that:
1) previous work has shown that the distance between the body of an adapted MAML model is minimal compared to the unadapted MAML (which we reproduce in \ref{feature_extractor_sl_vs_maml_vs_adapted_maml} in the green line) 
and 2) the fact that a MAML5 adaptation is only 5 steps of MAML while LR fully converges the prediction layer.
We want to highlight that only the MAML5 model achieved the maximum meta-test performance of 0.6 with the MAML5 adaptation -- suggesting that the USL and MAML5 meta-learning algorithms might learn different representations.
For USL to have a fair comparison during meta-test time when using the MAML adaptation, we provide the MAML final layer learned initialization parameters to the USL model (but any is fine due to convexity when using a fixed feature extractor).
This is needed since during meta-training USL is trained with a union of all the labels (64) -- so it does not even have the right output size of 5 for few-shot prediction. 
Meta-testing was done in the standard 5-way, 5-shot regime.
}
\label{sl_vs_maml_accuracy_comparison_table}
\end{table*}

\begin{figure}[h!]
\centering
\includegraphics[width=0.45\linewidth]  
{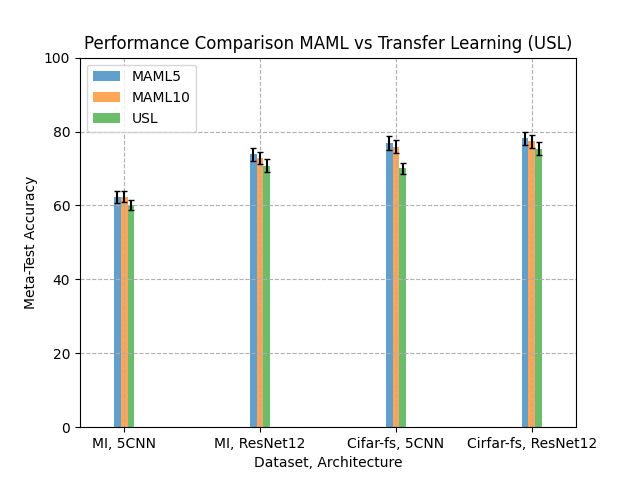}
\caption{
\textbf{MAML trained models and union supervised trained (USL) models have statistically equivalent meta-test accuracy for MiniImagenet and Cifar-fs with Resnet12 and five layer CNNs.}
This holds for both the Resnet12 architecture used in \citep{rfs} and the 5 layer CNN (indicated as ``5CNN") in \citep{ravi}.
Results used a (meta) batch-size of 100 tasks and 95\% confidence intervals.
All MAML models were trained with 5 inner steps during meta-training.
``MAML5" and ``MAML10" in the bar plot indicates the adaptation method used at test time i.e. we used 5 inner steps and 10 inner steps at test time.
MiniImagenet is abbreviated as ``MI" in the figure.
}
\label{all_archs_all_data_sets_maml_vs_tl_mi_cirfarfs_5cnn_resnet12rfs_perf_comp_bar_fig}
\end{figure}

\begin{figure*}[h!]
\centering
\includegraphics[width=0.8\linewidth]{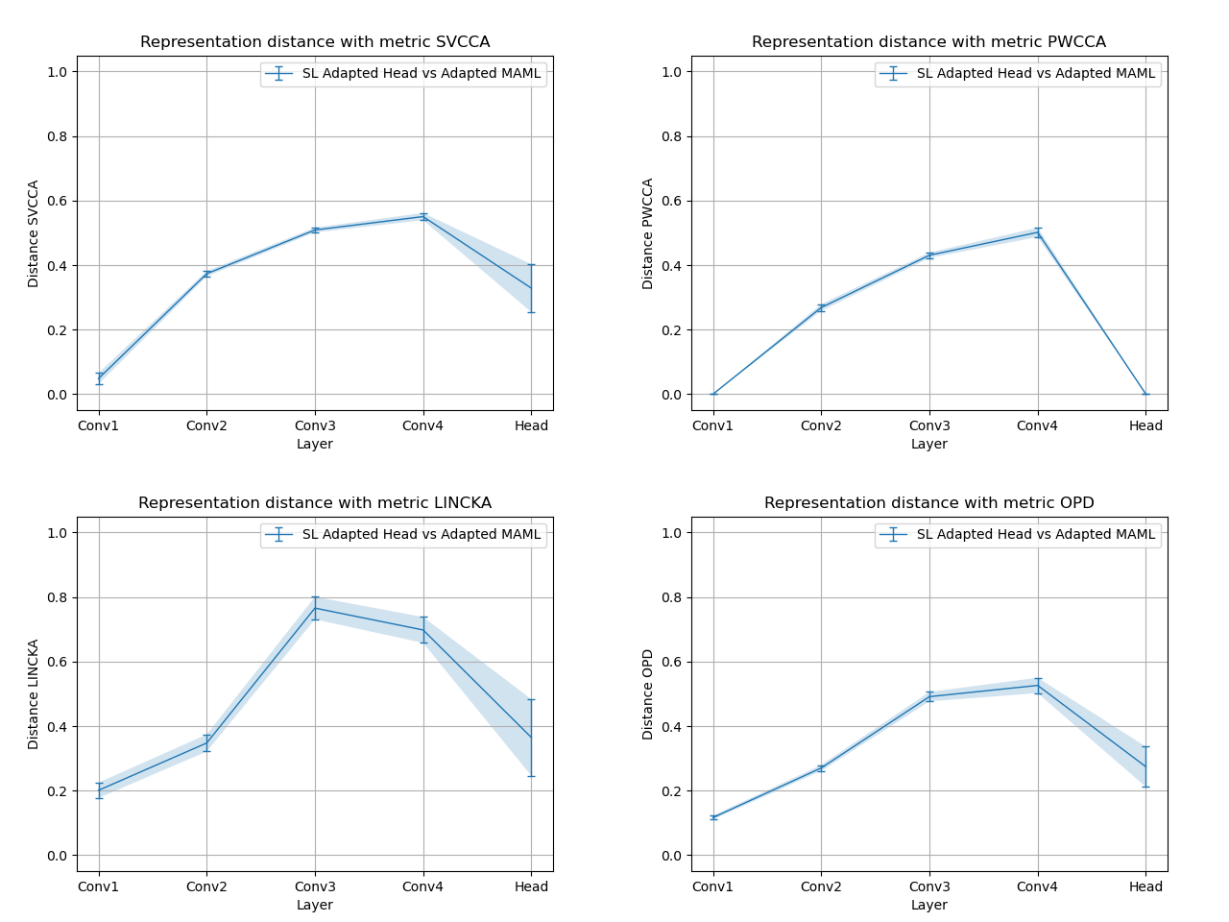}
\caption{ 
\textbf{The classification layer of transfer learning and a MAML5 model decrease in distance -- implying similar predictions.}
More precisely, an initialization trained with 5 inner steps (MAML5) has an increasingly similar head (classifier) after adaptation with MAML5 compared to the classifier layer of the Union Supervise-Learned (USL) model that has been adapted only at the final layer.
In particular, the USL model has been adapted with Logistic Regression (LR) with LBFGS with the default value for the l2 regularization parameter given by Python's Sklearn (as in \citep{rfs}).
We showed this trend with four different distance metrics - SVCCA, PWCCA, LICKA, and OPD - as referenced in section \ref{background}.
Observe that according to PWCCA, the distance between the predictions is zero.
This is true because the distance of classification layer (indicated as ``head" in the figure) is zero.
The architecture used here is a five layer CNN as in \citep{maml, ravi} with their same setup.
The benchmark used for this analysis is MiniImagenet.
}
\label{prediction_adapted_sl_vs_adapted_maml}
\end{figure*}

\subsection{Is the Equivalence of MAML and Transfer Learning related to Model Size or Low Diversity?}

An alternative hypothesis to explain the equivalence of transfer learning (with USL) and MAML could be due to the capabilities of large neural networks to be better meta-learners in general. 
Inspired by the impressive ability of large language models to be few-shot (or even zero-shot) learners \citep{gpt3, foundation_models, clip, bert} -- we hypothesized that perhaps the meta-learning capabilities of deep learning models is a function of the model size.
If this were true, then we expected to see the difference in meta-test accuracy of MAML and USL to be larger for smaller models and the difference to decrease as the model size increased.
Once the two models were, of the same size but large enough, we hypothesized that the meta-test accuracy would be the same.
We tested this to rule out that our observations were a consequence of the model size. 
The results were negative and surprisingly the equivalence between MAML and USL seems to hold even as the model increased -- strengthening our hypothesis that the low task diversity might be a bigger factor explaining our observations. 
We show this in figure \ref{model_size}, and we want to draw attention to the fact this statistical equivalence holds even when using only four filters -- the case where we expected the biggest difference.

\begin{figure*}[h!]
\centering
\includegraphics[width=0.8\linewidth]{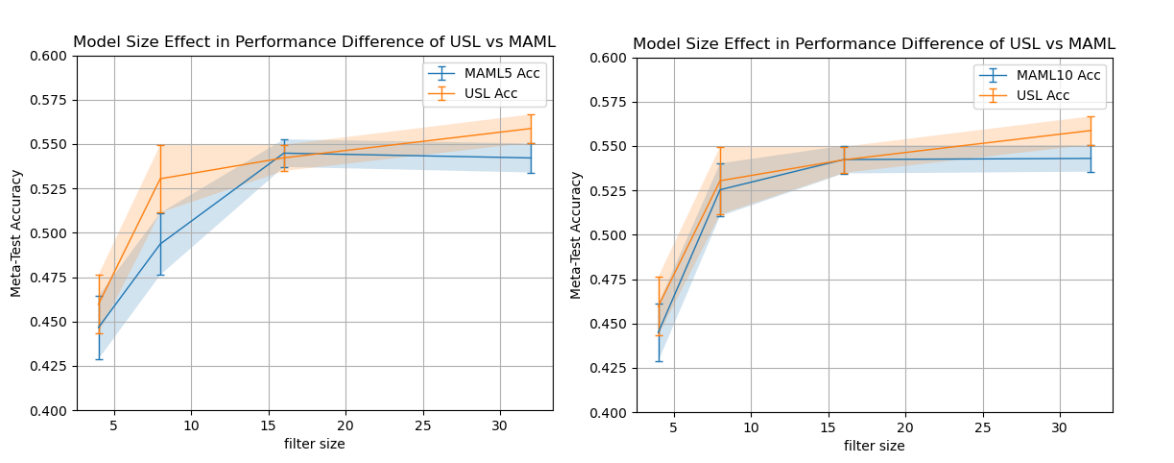}
\caption{ 
\textbf{The meta-test accuracy of MAML and transfer learning using USL is similar in a statistically significant way -- regardless of the model size}.
In this experiment, we used the MiniImagenet benchmark, the five layer CNN used in \citep{maml, ravi}, and only increased the filter size using sizes 4, 8, 16, and 32.
We made sure the comparison was fair by using the same architecture, optimizer, and trained all models to convergence.
During meta-training, the MAML model was trained using 5 inner steps. 
The legends indicating MAMl5 and MAML10 refer to the number of inner steps used at test time. 
We used a (meta) batch size of 100 tasks.
}
\label{model_size}
\end{figure*}

\subsection{MAML learns a different base model compared to Union Supervised Learned models -- even in the presence of low task diversity}

The first four layers of figure \ref{prediction_adapted_sl_vs_adapted_maml} shows how large the distance is of a MAML representation compared to a SL representation. 
In particular, it is much larger than the distance value in the range $[0,0.1]$ from previous work that compared MAML vs. adapted MAML \citep{Raghu}.
We reproduced that and indeed MAML vs. adapted MAML has a small difference (smaller for us) -- supporting our observations that a MAML vs. a USL learned representations are different at the feature extractor layer even when the diversity is low.
Results are statistically significant.



\subsection{Synthetic Experiments showing closeness of MAML and Transfer Learning as Diversity Changes}

In this section, we show the closeness of MAML and transfer learning (with USL) for synthetic experiments for low and high diversity regimes in Figure \ref{meta_test_vs_task2vec_div}.
In the low regime, the two methods are equivalent in a statistically significant way -- which supports the main claims of our paper.
As the diversity increases, however, the difference between USL and MAML increases (in favor of USL).
This will be explored further in future work.

The task is the usual n-way, k-shot tasks, but the data comes from a Gaussian and the meta-learners are tasked with classifying from which Gaussian the data points came from in a few-shot learning manner.
Benchmarks are created by sampling a Gaussian distribution with means moving away from the origin as the benchmark changes.
Therefore, the Gaussian benchmark with the highest diversity coefficient has Gaussians that are the furthest from the origin.
We computed the task diversity coefficients using the Task2Vec method as outlined in Section \ref{def_div}, using a random 3-layer fully connected probe network described in Section \ref{experimental_details}.

\begin{figure*}[h!]
\centering
\includegraphics[width=0.8\linewidth]{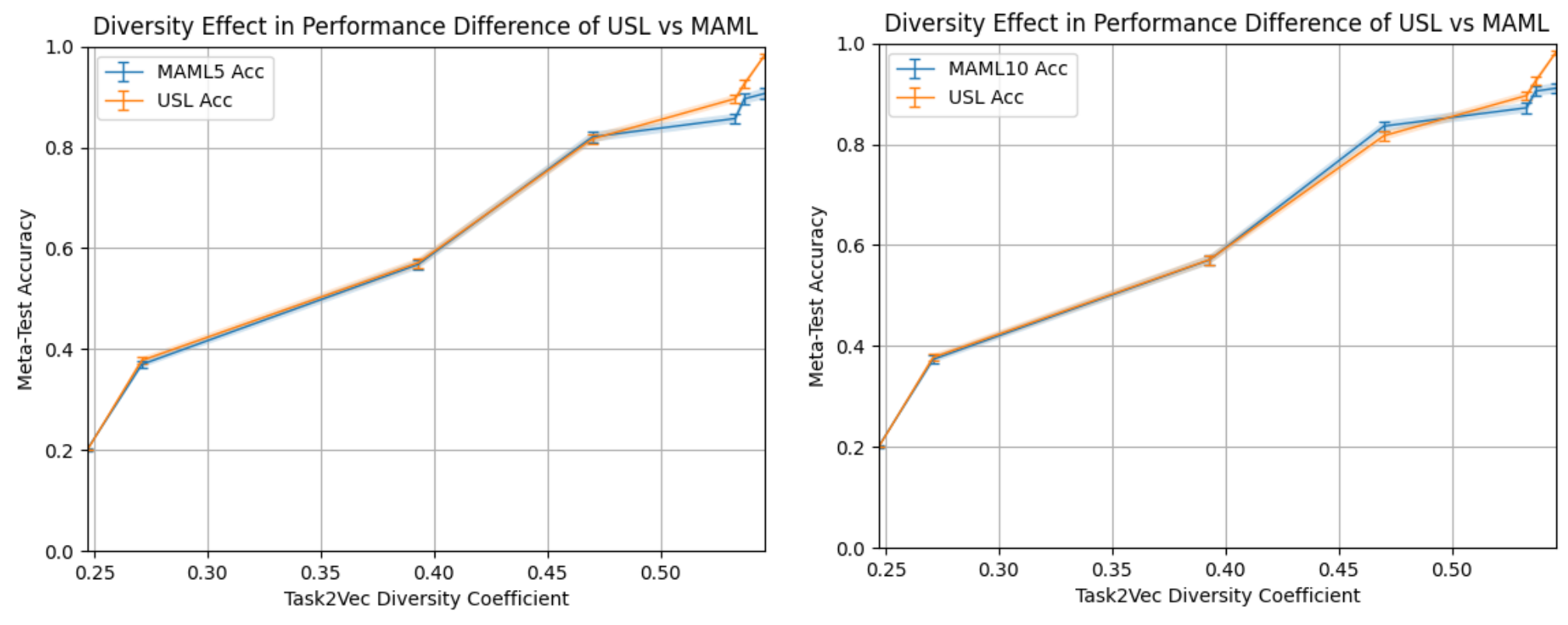}
\caption{ 
\textbf{The meta-test accuracy of MAML and transfer learning using USL is similar in a statistically equivalent way in the low diversity regime in the 5-way, 10-shot Gaussian Benchmarks}.
MAML models were trained with 5 inner steps.
MAML5 and MAML10 indicate the adaptation procedure at test time.
Results used a (meta) batch-size of 500 tasks and 95\% confidence intervals.
As the diversity of the benchmark increases, the Gaussian tasks are sampled further away from the origin.
Note, as the diversity increases, the difference between USL and MAML increases (in favor of USL).
}
\label{meta_test_vs_task2vec_div}
\end{figure*}

\section{Related Work}

Our work proposes a problem-centric framework for the analysis of meta-learning algorithms inspired from previous puzzling results \citep{rfs}.
We propose to use a pair-wise distance between tasks and analyze how this metric might correlate with meta-learning. 
The closest line of work for this is the long line of work by \citep{task2vec} where they suggest methods to analyze the complexity of a task, propose unsymmetrical distance metrics for data sets, reachability of tasks with SGD, ways to embed entire data sets and more \citep{task2vec, DynamicDistance, DynamicsReachability, ComplexityTasks}.
We hypothesize this line of work to be very fruitful and hope that more people adopt tools like the ones they suggest and we propose in this paper before researching or deploying meta-learning algorithms.
We hope this helps meta-learning methods succeed in practice -- since cognitive science suggests meta-learning is a powerful method humans use to learn \citep{Lake2016}.
In the future, we hope to compare \citep{task2vec}'s distance metrics between tasks with ours to provide a further unified understanding of meta-learning and transfer learning.
A contrast between their work and ours is that we focus our analysis from a meta-learning perspective applied to few-shot learning -- while their focus is understanding transfer learning methods between data sets.

Our analysis of the feature extractor layer is identical to the analysis by \citep{Raghu}.
They showed that MAML functions mainly via feature re-use than by rapid learning i.e., that a model trained with MAML changes very little after the MAML adaptation.
The main difference of their work with our is:
1) that we compare MAML trained models against union supervised learned models (USL) instead of only comparing MAML against adapted MAML, and
2) that we explicitly analyzed properties of the data sets.
In addition, we use a large set of distance metrics for our analysis including: SVCCA, PWCCA, LINCKA and OPD as proposed by \citep{svcca, pwcca, cka, opd}.

Our work is most influenced by previous work suggesting modern meta-learning requires rethinking \citep{rfs}.
The main difference of our work with theirs is that we analyzed the internal representation of the meta-learning algorithms and contextualize these with quantifiable metrics of the problem being solved.
Unlike their work, we focused on a fair comparison between meta-learning methods by ensuring the same neural network backbone was used.
Another difference is that they gained further accuracy gains by using distillation -- a method we did not analyze and leave for future work.

A related line of work \citep{empericalstudypresentation2020, Miranda2020} first showed that there exist synthetic data sets that are capable of exhibiting higher degrees of adaptation as compared to the original work by \citep{Raghu}.
The difference is that they did not compare MAML models against transfer learning methods like we did here. 
Instead, they focused on comparing adapted MAML models vs. unadapted MAML models.

Another related line of work is the predictability of adversarial transferability and transfer learning. 
They show this both theoretically and with extensive experiments \citep{Liang2021}.
The main difference between their work and ours is that they focus their analysis mainly on transfer learning, while we concentrated on meta-learning for few-shot learning.
In addition, we did not consider adversarial transferability -- while that was a central piece of their analysis.
Further, related work is outlined in the supplementary section \ref{related_work2}.

\section{Discussion and Future Work}

In this work, we presented a problem-centric framework when comparing transfer learning methods with meta-learning algorithms -- using USL and MAML as the representatives of transfer and meta-learning methods respectively.
We showed the diversity coefficient of the popular MiniImagenet and Cifar-fs benchmark is low and that under a fair comparison -- MAML is very similar to transfer learning (with USL) at test time.
This was also true even when changing the model size -- removing the alternative hypothesis that the equivalence of MAML and transfer learning with USL held due to large models.
Instead, this strengthens our hypothesis that the diversity of the data might be the driving factor.
The equivalence of MAML and USL was also replicated in synthetic experiments.
Therefore, we challenge the suggestions from previous work \cite{rfs} that only a good embedding can beat more effective than sophisticated meta-learning -- especially in the low diversity regime, and instead suggest this observation might be due to lack of good principles to design meta-learning benchmarks.
In addition, our synthetic experiments show a promising scenario where we can systematically differentiate meta-learning algorithms from transfer learning algorithms -- which supports our actionable suggestion to use the diversity coefficient to effectively study meta-learning and transfer learning algorithms. 
We hope to study this in more depth in the future with real and synthetic data.
In addition, this problematizes the observations that fo-MAML in meta-data set \citep{Triantafillou2019} is better than transfer learning solutions -- since our synthetic experiments show MAML is not better than USL in the high diversity regime.
To further problematize, we want to point out that meta-learning methods are not better than transfer learning as observed by \citep{bscd_fsl} -- as observed in our synthetic experiments.
Meaning that further research is needed in both data sets -- especially from a problem centric perspective with quantitative methods like the ones we suggest.

We also have theoretical results from a statistical decision perspective in the supplementary section \ref{statistical_decision_theory_for_meta_learning} that inspired this work and suggest that when the distance between tasks is zero -- then the predictions of transfer learning, meta-learning and even a fixed model with no adaptation are all equivalent (with the l2 loss).
The results are theoretically limited because we can only reason when the diversity is exactly zero, but regardless provided an interesting perspective to study and inspire empirical work.

We hope this work inspires the community in meta-learning and machine learning to construct benchmarks from a problem-centric perspective -- that go beyond only large scale data sets -- and instead use quantitative metrics for the construction of such research challenges.

\bibliography{neurips_2022}
\bibliographystyle{plainnat}

\newpage
\appendix
\onecolumn


\section{Further Discussions}\label{further_discussions}

We'd like to emphasize that our synthetic experiments are promising because we can systematically differentiate meta-learning algorithms from transfer learning algorithms -- which supports our actionable suggestion to: 1. use the diversity coefficient to effectively study meta-learning and transfer learning algorithms, and 2. to use the diversity coefficient to design better benchmarks. 
In addition, this problematizes the observations that fo-proto-MAML in meta-data set \citep{Triantafillou2019} is better than transfer learning solutions -- since our synthetic experiments show MAML is not better than USL in the high diversity regime.
To further problematize, we want to point out that meta-learning methods are not better than transfer learning as observed by \citep{bscd_fsl} -- as observed in our synthetic experiments.
We hypothesize however that the two scenarios in \citep{Triantafillou2019} are different \citep{bscd_fsl}.
The first one focuses on the same meta-training and meta-testing conditions, while the latter focuses on a cross-domain.
We hypothesize that the cross-domain scenario might benefit from a meta-learning which lower variance (e.g., a fixed embedding \citep{rfs}) -- which might explain why sophisticated meta-learning solutions might perform worse on the cross-domain setting as observed in \citep{bscd_fsl}.
Further research is needed in both benchmarks -- especially from a problem centric perspective with quantitative methods like the ones we suggest.

In addition, we hypothesize that diversity might be a good proxy to predict the difference between meta-learning and transfer learning methods.
More precisely, we conjecture that in a low diversity setting meta-learning methods are equivalent at meta-test time to transfer learning methods but their difference increases as the diversity of tasks in a benchmark increases.
In the high diversity regime we conjecture that the difference between meta-learning and transfer learning methods increases as the diversity increases.
We are optimistic that meta-learning algorithms might outperform transfer learning methods, once we start comparing them in more thoughtfully designed benchmarks.
It is possible that despite our efforts, meta-learning algorithms -- as currently designed -- are too sophisticated and in fact lead to meta-overfitting, as shown in previous work \citep{maml_only_works_via_feature_reuse_miranda2021}.

We'd like to emphasize, that up until now, the meta-learning community has evaluated meta-learning algorithms in benchmarks that might not be the most appropriate.
We conjecture high diversity benchmarks are more appropriate, since they might capture the meta-learning inductive prior: high diversity means that adaptation is required by construction.
Thus, we conjecture that previous conclusions should be taken with a grain of salt until a more in depth study can be made in the high diversity regime -- especially with benchmarks with real world data that have been analyzed extensively with metrics like the diversity coefficient that we propose.
We conjecture that we can finally do meta-learning research effectively -- given that a regime where meta-learning and transfer learning methods can be differentiated has been discovered and previous low diversity benchmarks have been understood.

We also conjecture that meta-learning research is different from classical machine learning research.
Historically, a seminal paper is the one where AlexNet was proposed \citep{alexnet}.
In that time we had low performance on a fixed task e.g., Imagenet and couldn't even interpolate the data (i.e., reach zero train error).
We conjecture that meta-learning is different because if we have a diversity so large where all possible tasks are incorporated, then we should reach the no-free lunch theorem regime \citep{nfl} -- where all algorithms should perform the same on average. 
Therefore, we hypothesize that a deliberate and quantitative efforts to design benchmarks is essential. 
A great example of such an attempt is the Abstraction and Reasoning Corpus (ARC) benchmark \citep{Chollet2019} -- which was made very thoughtfully with Artificial General Intelligence (AGI) in mind.
We conjecture meta-learning is the most promising path in that direction, and hope this work inspires the design of benchmarks that lead to actionable and deliberate attempts to make progress to build such AGI technologies.

\section{Related Work (continued)}\label{related_work2}

The meta-learning literature is growing quickly and hope to provide a wider coverage here.

In terms of benchmarks, we'd like to start with the ARC benchmark \citep{Chollet2019}. 
ARC was designed with AGI in mind -- arguably the ultimate meta-learner. 
Its focus is primarily on visual reasoning using program synthesis techniques.
We hypothesize it's a very promising path but our work inspires extension that go beyond program synthesis approaches. 
The meta-data set benchmark is an attempt to make the data set for few-shot learning at a larger scale and more diverse \citep{Triantafillou2019}.
The main difference of their work and ours is that we propose a quantitative metric to measure the intrinsic diversity in the data and go beyond data set size or number of classes.
They also showed that a meta-learning algorithms --  fo-Proto-MAML -- is capable of beating transfer learning. 
However, they also showed transfer learning baselines are in fact quite difficult to beat.
The IBM Cross-Domain few-shot learning benchmark \citep{bscd_fsl} is a fascinating benchmark to evaluate meta-learning algorithms.
Their central premise however is transfer from a source domain to a different target domain -- instead of our setting where tasks are created from the same meta-distribution.
This is why their paper is considered, in addition to few-shot learning, a \textit{cross-domain benchmark}.
We believe this is an essential scenario to think about, but consider it different from our setting or the setting of meta-data set. 
We'd also like to emphasize that they do not employ a metric like our diversity coefficient that quantitatively asses the diversity of their benchmarks.
These two last benchmarks, although fascinating, are missing the essential quantitative analysis of the data itself we are trying to propose.

The work by \citep{Chen2021} give to the best of our knowledge -- the first non-vacuous generalization bounds for the (supervised) meta-learning setting. 
Their statements apply to non-convex loss function and use stability theory at the task level. 
The bound depends on the mutual information on the input data vs the output data of the meta-learner. 
The results, although fascinating, are not built to separate classes of meta-learning -- like our work attempts to do empirically. 

The work by \citep{global_labels} proposes the idea of global labels as a way to indirectly optimize for the met—learning objective for a fixed feature extractor. 
Global labels is equivalent to the concept we call USL in this paper. 
They show that pre-training (i.e. using USL/global labels) provides excellent meta-test results -- including with their method (named MeLa) that can infer global labels given only local labels provided at in episodic meta-training. 
Their theoretical analysis depends on a fixed feature extractor, instead of considering the whole end-to-end meta-learner as a whole -- meaning two different deep learning models cannot be used in their analysis.
Therefore, their analysis fails to separate how different feature extractors might be trained, e.g. comparing USL vs MAML directly in an end-to-end fashion.
In contrast, we instead tackle this question head on theoretically \ref{statistical_decision_theory_for_meta_learning} (with limited results) but instead show that the feature extractors indeed are different empirically \ref{feature_extractor_section}. 

The work by \citep{Denevi2020} proposes a theoretical treatment of meta-learning using meta-learners with closed-form equations derived from ridged regularization using fixed features. 
They formulate the conditional and unconditional formulation using side information for the task (e.g. the support set) and show the conditional method is superior.
In relation to our work, they do not provide characterizations of the role of a neural network doing end-to-end meta-learning (in their empirical or theoretical analysis). 
In contrast, our findings make an explicit effort in understanding the role of the neural network in meta-learning in an end-to-end fashion through emperical analysis. 
Another contrast is that their results are highly theoretical, while ours focus on empirical results. 
In addition, their results are on synthetic experiments and do not explore their findings in the context of modern few-shot learning benchmarks like MiniImagenet or Cifar-fs.

The work by \citep{Goldblum2020} provide strong evidence that adaptation at test time is best done when the meta-trained model matches the adaptation it was meta-trained with. 
This is shown because their classically pre-trained nets cannot perform better than the MetaOpt models with any fine-tuning method. 
However, their results cannot beat \citep{rfs} and thus does not help separate the role of meta-training and union supervised learning (USL). 
Their Resnet12 results do provide further support to our hypothesis that large enough neural networks all perform the same, since 78.63 (Goldblum) vs 79.74 (RFS) have very close errors, in line with our findings. 
However, we hypothesize it is not due to the model size in accordance with our experiments \ref{model_size}.

The work by \citep{Gao2020} provides theoretical bounds of when the expected risk of MAML and DRS (Domain Randomized Search) by bounding the gradient norm. 
DRS attempts to model USL but fails to do so completely, because USL is capable of modeling adaptation because the final layer is capable of adaption. 
Thus, it does not address the capabilities of the feature extractor being able to learn all the information needed to meta-learn. 
Concisely, their analysis is not capable of separating performance of MAML and USL. 
Even if it hypothetically could, their analysis remains an upper bounds (with assumptions).
This raising the question if their method truly explain the observations that transfer learning methods -- like USL -- beat meta-learning methods. 
In addition, they do not provide in depth empirical analysis with respect to any real few-shot learning benchmarks like MiniImagent or Cirfar-fs. 

The work by \citep{Kumar2022} provides an exploration of the effects of diversity in meta-learning.
The main difference with our work is that they focus mostly on sampling strategies, and it's effect on diversity, while we focused on the intrinsic diversity in the benchmarks themselves. 

The work by \citep{Rosenfeld2021} provides a theoretical analysis on the difference between interpolation and extrapolation in transfer learning (and domain generalization).
We believe this type of theory may be helpful as an inspiration to explore why in the high diversity regime there seems to be a difference between the performance of meta-learning and transfer learning methods.

\section{Convergence of Learning Curves for Fair Comparison}\label{fair_comparison}

\subsection{Convergence of Learning Curves for MiniImagenet and Cifar-fs}

In this section, we have the plots showing the learning curves achieving convergence for the models used in figure \ref{all_archs_all_data_sets_maml_vs_tl_mi_cirfarfs_5cnn_resnet12rfs_perf_comp_bar_fig} and \ref{sl_vs_maml_accuracy_comparison_table}.
Note, the learning curves for the models trained with MAML look noisier because the distributed training reduces the size (meta) batch size for logging purposes.
In addition, due to episodic meta-training, (meta) batch sizes have to be smaller compared to batch sizes used in USL.

\begin{figure*}[h!]
\centering
\includegraphics[width=0.5\linewidth]{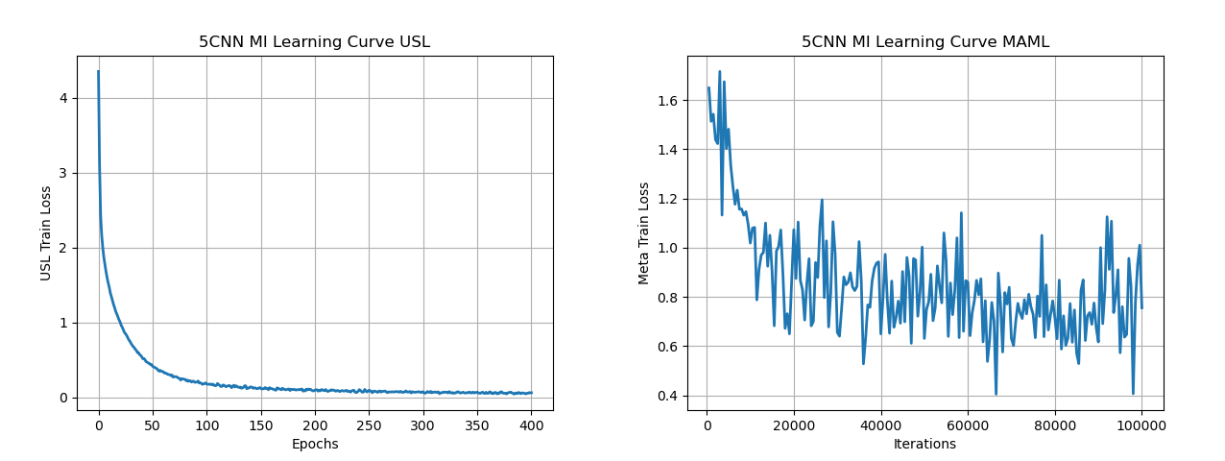}
\caption{
Plot showing convergence of 5CNN on MiniImagenet.
}
\label{5cnn_mi_learning_curve_usl_maml}
\end{figure*}

\begin{figure*}[h!]
\centering
\includegraphics[width=0.5\linewidth]{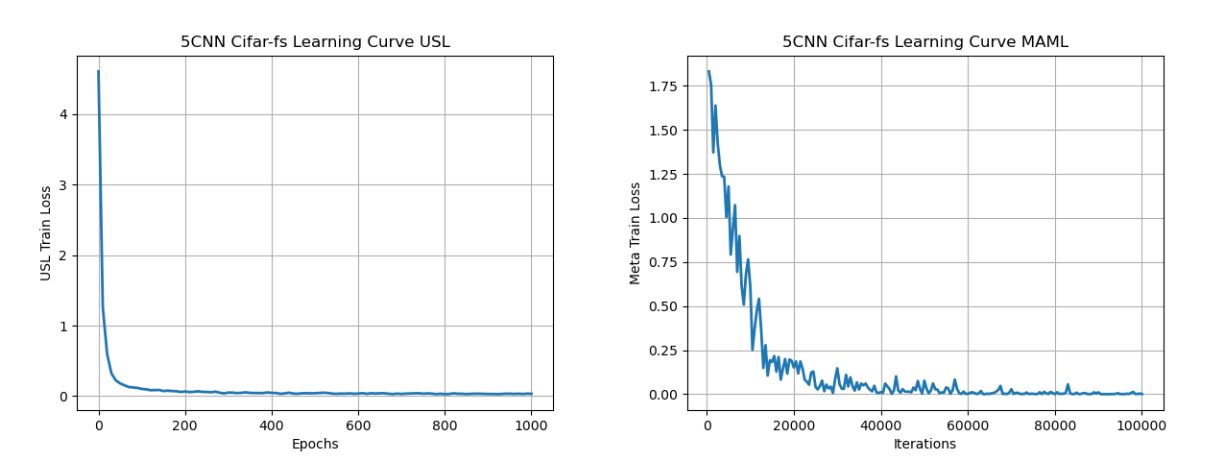}
\caption{
Plot showing convergence of 5CNN on Cifar-fs.
}
\label{5cnn_cifarfs_learning_curve_usl_maml}
\end{figure*}

\begin{figure*}[h!]
\centering
\includegraphics[width=0.5\linewidth]{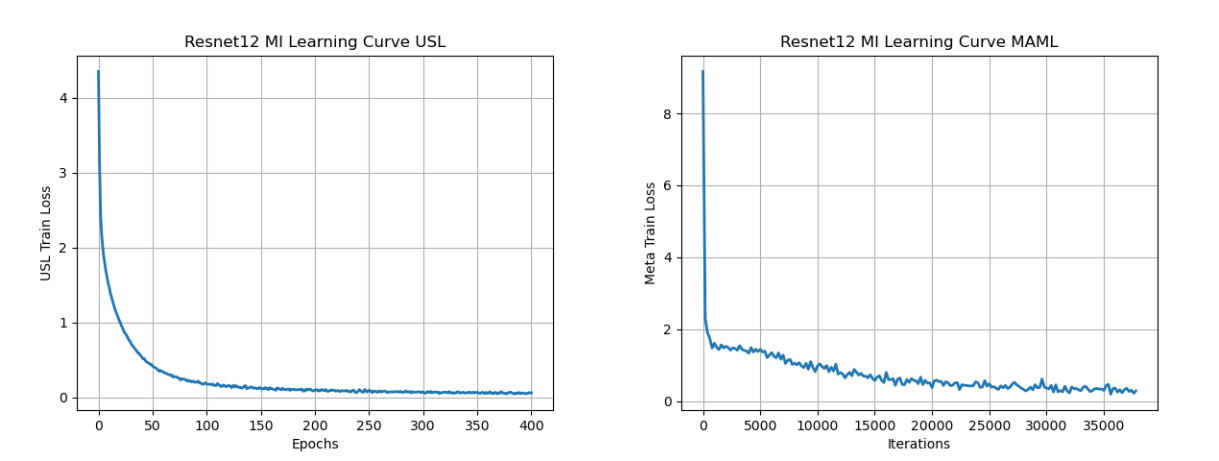}
\caption{
Plot showing convergence of Resnet12 on MiniImagenet.
}
\label{resnet12_mi_learning_curve_usl_maml}
\end{figure*}

\begin{figure*}[h!]
\centering
\includegraphics[width=0.5\linewidth]{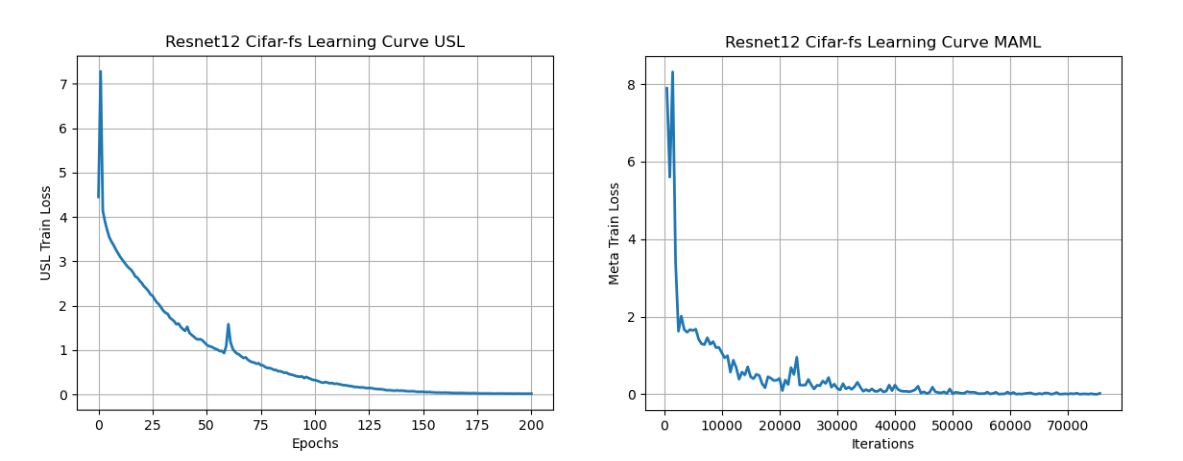}
\caption{
Plot showing convergence of Resnet12 on Cifar-fs.
}
\label{resnet12_cifarfs_learning_curve_usl_maml}
\end{figure*}

\begin{figure*}[h!]
\centering
\includegraphics[width=0.5\linewidth]{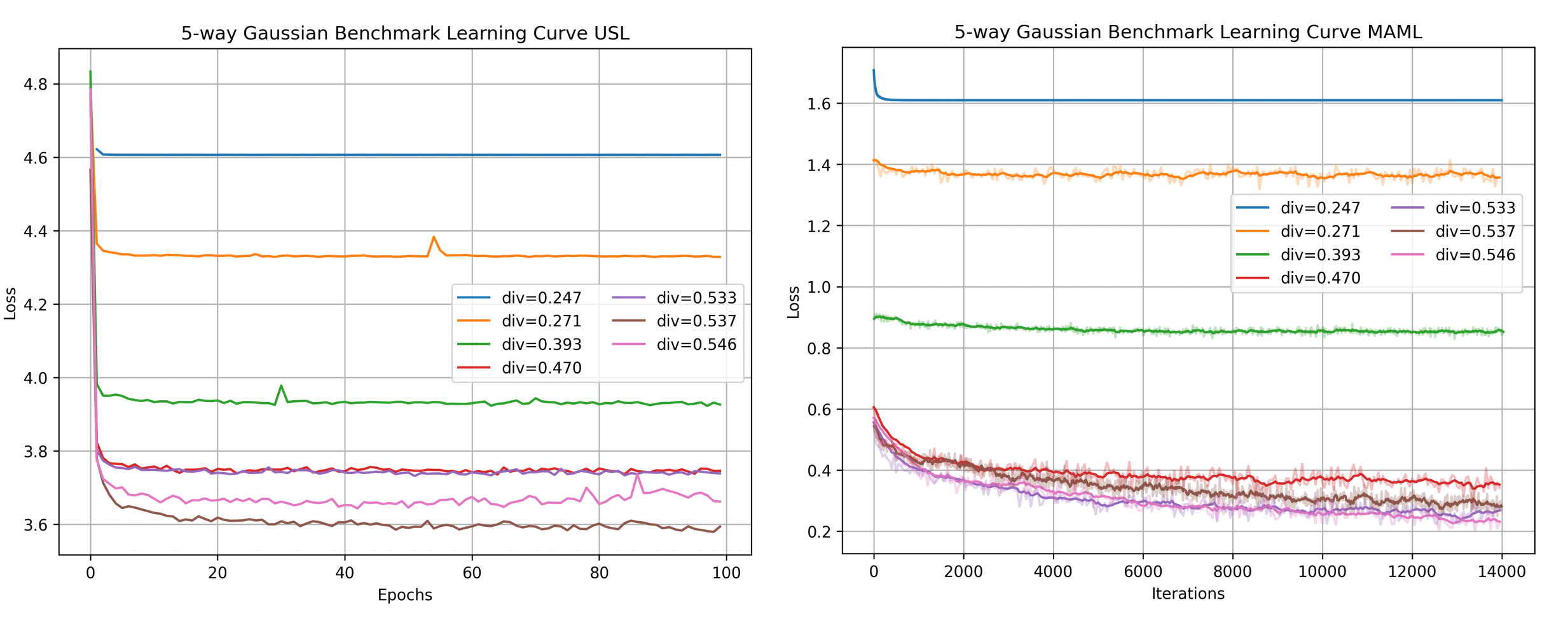}
\caption{
Plot showing convergence of a custom 3-layer fully connected network, for all synthetic Gaussian benchmarks tested. 
Each curve represents a different synthetic Gaussian benchmark tested on either MAML (left plot) or USL (right plot). The curves are then color-coded by the value of the Task2Vec-based task diversity coefficient of the Gaussian benchmark tested in that particular run.   
}
\label{1d_gaussian_learning_curve_usl_maml}
\end{figure*}


\section{Experimental Details}\label{experimental_details}

\subsection{Experimental Details on MiniImagenet and Cifar-fs}

We will explain the details for the four models we trained on figure \ref{all_archs_all_data_sets_maml_vs_tl_mi_cirfarfs_5cnn_resnet12rfs_perf_comp_bar_fig} and \ref{sl_vs_maml_accuracy_comparison_table}.

\textbf{Summary:} 
We trained a five layer CNN (5CNN) and Resnet12 on both MiniImagenet and Cifar-fs to convergence. 
We used the Adam optimizer with learning rate 1e-3. 
We used the standard MiniImagenet and Cifar-fs data augmentations as provided in \citep{l2l} matching \citep{rfs}. 

\textbf{Experimental Details for 5CNN on MiniImagent:}
We used the five layer CNN from \citep{maml, ravi}.
We used 32 filters as used in previous work.
We used the Adam optimizer with learning rate 1e-3 for both MAML and USL. 
We used no scheduler. 
We trained the USL model for 1000 epochs. 
We trained the MAML model for 100,000 episodic iterations (outer loop iterations).
We used a batch size of 128 for USL and a (meta) batch size of 8 for MAML.
For MAML we used an inner learning rate of 1e-1 and 5 inner learning steps.
We did not use first order MAML.
It took 3 hours 5 minutes 5 seconds to train USL to convergence with a single GPU.
It took 1 day 6 hours 21 minutes 8 seconds to train MAML to convergence with 4 NVIDIA GeForce GTX TITAN X GPUs.

\textbf{Experimental Details for Resnet12 for MiniImagent:}
We used the Resnet12 provided by \citep{rfs}.
We used the Adam optimizer with learning rate 1e-3 for both MAML and USL.
We used the same cosine scheduler as in \citep{rfs} for USL and no cosine scheduler for MAML. 
We trained the USL model for 186 epochs. 
We trained the MAML model for 37,800 episodic iterations (outer loop iterations).
We used a batch size of 512 for USL and a (meta) batch size of 4 for MAML.
For MAML we used an inner learning rate of 1e-1 and 4 inner learning steps.
We did not use first order MAML.
It took 1 day 17 hours 2 minutes 41 seconds to train USL to convergence with a single dgx A100-SXM4-40GB GPU.
The MAML model was trained with Torchmeta \citep{torchmeta} which didn't support multi gpu training when we ran this experiment, so we estimate it took 1-2 weeks to train on a single GPU.
In addition, it was ran with an earlier version of our code, so we unfortunately did not record the type of GPU but suspect it was either an A100, A40 or Quadro RTX 6000.

\textbf{Experimental Details for 5CNN for Cifar-fs:}
We used the five layer CNN from \citep{maml, ravi} provided by \citep{l2l}.
But we used 1024 filters instead of 32 (to speed up convergence).
We used the Adam optimizer with learning rate 1e-3 for both MAML and USL. 
We used the same cosine scheduler as in \citep{rfs} for MAML and no cosine scheduler for USL. 
We trained the USL model for 1000 epochs. 
We trained the MAML model for 100,000 episodic iterations (outer loop iterations).
We used a batch size of 256 for USL and a (meta) batch size of 8 for MAML.
For MAML we used an inner learning rate of 1e-1 and 5 inner learning steps.
We did not use first order MAML.
It took 10 hours 43 minutes 31 seconds to train USL to convergence with a single GPU dgx A100-SXM4-40GB.
It took 2 days 9 hours 26 minutes 27 seconds to train MAML to convergence with 4 Quadro RTX 6000 GPUs.

\textbf{Experimental Details for Resnet12 for Cifar-fs:}
We used the Resnet12 provided by \citep{rfs}.
We used the Adam optimizer with learning rate 1e-3 for both MAML and USL. 
We used the cosine scheduler used in \citep{rfs} for both USL and MAML. 
We trained the USL model for 200 epochs. 
We trained the MAML model for 75,500 episodic iterations (outer loop iterations).
We used a batch size of 1024 for USL and a (meta) batch size of 8 for MAML.
For MAML we used an inner learning rate of 1e-1 and 5 inner learning steps.
We did not use first order MAML.
It took 45 minutes 54 seconds to train USL to convergence with a single GPU.
It took 1 day 19 hours 29 minutes 31 seconds to train MAML to convergence with 4 dgx A100-SXM4-40GB GPUs.

\textbf{Why the Adam optimizer?} 
We hypothesize that the Adam optimizer is the most appropriate optimizer for a fair comparison for various reasons.
First, the Adam optimizer is widely used -- making our results most relevant and broadly applicable.
Adam is generally a stable optimizer -- especially for sophisticated meta-learning algorithms like MAML.
It is not uncommon to have SGD result in exploding gradients or end up diverging -- especially for MAML. 
Most importantly however, we hope to stay faithful whenever possible to how modern transformer models are trained -- because they have been shown to be good meta-learners, e.g., gpt-3 is often cited as a zero-shot learner \citep{gpt3}.
These type of models do use more complicated learning schemes besides only Adam (e.g., warm-ups, decay rates etc.) but we hypothesize using Adam is a good first step.
We conjecture that the small benefits that SGD might provide are negligible compared to the stability that Adam provides, especially as the scale of the data sets starts to increase.
Without Adam we conjecture it would be hard to even perfectly fit the data for large scale data sets as it's usually done in Deep Learning.
This was definitively true in our own experiments.
Therefore, we decided to use Adam for our experiments, since it would be too hard to use SGD reliably at scale or with sophisticated meta-learning algorithms.

\subsection{Experimental Details on N-way Gaussian Tasks}
We used a custom 3-layer fully connected network derived from Learn2Learn's OmniglotFC model \citep{l2l}, with parameters $input\_size = 1$, $output\_size=5$, and hidden layer sizes $sizes = [128,128]$. 
We used the Adam optimizer with learning rate 1e-3 for both MAML and USL, but did not use a cosine scheduler for either USL or MAML.
We trained the USL model for 100 epochs. 
We trained the MAML model for 14,000 episodic iterations (outer loop iterations).
We used a batch size of 100 for USL and a (meta) batch size of 100 for MAML.
For MAML we used an inner learning rate of 1e-1 and 5 inner learning steps.
We did not use first order MAML.
It took 19 minutes 24 seconds to train USL to convergence with a single Titan X.
It took 2 days and 13 hours 6 minutes 27 seconds to train MAML to convergence with a single Titan X.

\section{Feature Extractor Analysis of USL and MAML}\label{feature_extractor_section}

Figure \ref{feature_extractor_sl_vs_maml_vs_adapted_maml} significant difference between the feature extractor layers of a MAML
trained model vs. a union supervised learned model.

\begin{figure*}[h!]
\centering
\includegraphics[width=0.8\linewidth]{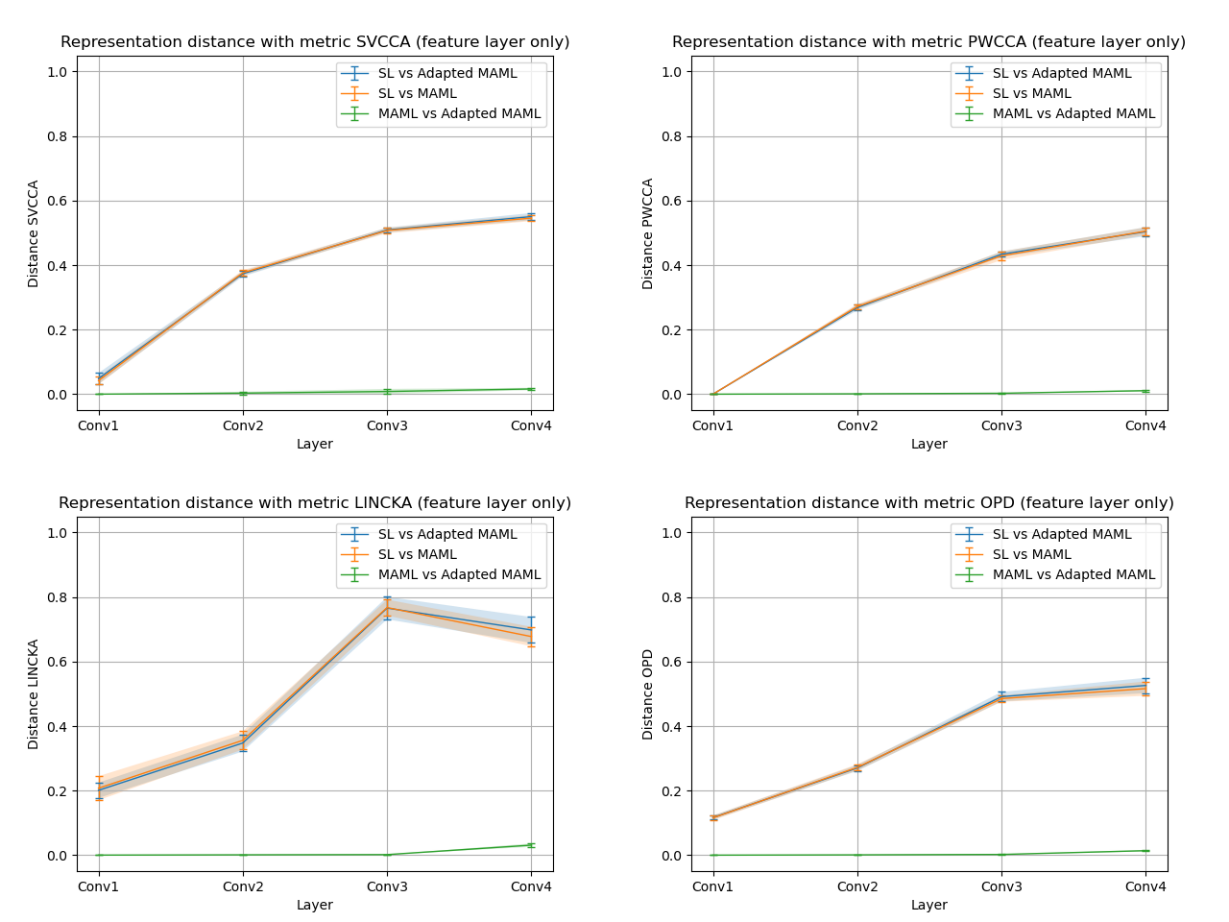}
\caption{
\textbf{Shows the significant difference between the feature extractor layers of a MAML trained model vs. a union supervised learned model -- especially in contrast to the small change in the adapted MAML model (green line).}
This figure suggests that although benchmark diversity is small, a meta-learned representation still learns through a different mechanism than a supervised learned representation.
Note that the green line is our reproduction of previous work \citep{Raghu} that showed that a MAML trained model does not change after using the MAML adaptation.
They term this observation as ``feature re-use".  
}
\label{feature_extractor_sl_vs_maml_vs_adapted_maml}
\end{figure*}

\section{Background of Few-shot Learning Basics}

The goal of few-shot learning is to learn to classify from a limited set of training samples. 
A few-shot benchmark is utilized to evaluate few-shot learning algorithms and typically contains many classes and a smaller number of samples per class. Typically, few-shot learning algorithms learn in episodes, where in each episode, a task consisting of a train (or support) set and a held-out validation (or query) set is sampled. 
In particular, a task is a $n$-way $k$-shot classification problem, means that the support and query sets each consist of $n$ classes sampled from the benchmark, and each of the $n$ classes are represented by $k$ shots or examples. 
The learner uses the support set to adapt to the task, and the query set to evaluate the performance on the given task.

\section{Synthetic Gaussian Benchmark and N-way Gaussian Tasks}\label{n_way_gaussian}

We create a series of synthetic few-shot benchmarks, where each Gaussian benchmark $B$ is defined by four parameters $B = (\mu_m, \sigma_m, \mu_s, \sigma_s)$. To form the dataset of our benchmark, we first sample 100 meta-train, 100 meta-test, and 100 meta-validation classes, where class $1 \leq i \leq 300$ is a Gaussian parameterized by $(\mu_{class_i}, \sigma_{class_i})$ where 
$$\mu_{class_i} \sim N(\mu_m, \sigma_m), \sigma_{class_i} \sim \vert N(\mu_s, \sigma_s)\vert$$ 
Then, for each class $i$, we sample 1000 data points $(x_{i,1},i) \dots (x_{i,1000},i)$ where each datapoint $(x,y)$ is composed of a input value $x \in \mathbb{R}$ and class label $1 \leq y \leq 300$. The input values $x_{i,1} \dots x_{1,1000}$ are each sampled from class $i$'s class distribution:
$$x_{i,1} \dots x_{i,1000} \sim N(\mu_{class_i}, \sigma_{class_i})$$
Having defined our dataset underlying our benchmark, we may now sample individual tasks from our benchmark. Each task in our benchmark is 5-way, 10-shot - that is, each task is formed by first sampling 5 ways from the benchmark dataset, then sampling 10 shots from each of the 5 ways. The goal of each task is to correctly predict which of the 5 ways an input value $x \in \mathbb{R}$ falls into.
We conducted experiments using 7 different benchmarks, with each benchmark defined by four parameters and its corresponding Hellinger distribution diversity coefficient and Task2Vec task diversity coefficient, as listed in Table \ref{div_gaussian_table}:

\begin{table}[h!]
\centering
\begin{tabular}{lll}
\toprule
 Benchmark Parameters \\ $(\mu_m, \sigma_m, \mu_s, \sigma_s)$ & Hellinger-based Distribution Diversity & Task2Vec-based Task Diversity \\
\midrule
  (0, 0.01, 1, 0.01) &  7.475e-05 $\pm$ 4.891e-07  & 0.247 $\pm$ 1.04e-3  \\
  (0, 1, 1, 0.01) & 0.183 $\pm$ 1.24e-3  & 0.271 $\pm$ 1.15e-3   \\ 
  (0, 3, 1, 0.01) & 0.574 $\pm$ 2.28e-3  &  0.393 $\pm$ 1.79e-3  \\
  (0, 10, 1, 0.01) & 0.860 $\pm$ 1.75e-3 &  0.470 $\pm$ 2.35e-3  \\
  (0, 20, 1, 0.01) &  0.929 $\pm$ 1.31e-3 & 0.533 $\pm$ 2.47e-3   \\
  (0, 30, 1, 0.01) & 0.952 $\pm$ 1.10e-3 & 0.537 $\pm$ 2.57e-3  \\
  (0, 1000, 1, 0.01) & 0.998 $\pm$ 2.07e-4 & 0.546 $\pm$ 2.74e-3  \\
\bottomrule
\end{tabular}
\caption{
\textbf{
Benchmarks of increasing diversity are created by increasing $\sigma_m$, or the standard deviation of the class mean }
A larger $\sigma_m$ increases the variance of the class means, making their respective class distributions farther apart on average and causing both the Hellinger-based distribution diversity and Task2Vec-based task diversity coefficients to increase. We varied $\sigma_m$ from 0.01 to 1000 and fixed all remaining benchmark parameters to obtain 7 different Gaussian benchmarks. The corresponding Hellinger-based distribution diversity coefficients were obtained by numerically approximating the expected Hellinger distance between two classes sampled from the benchmark and computing the 95\% confidence interval of the approximation. We also computed Task2Vec-based task diversity coefficients as an alternative measure to diversity using a random 3-layer fully connected probe network described in Section \ref{experimental_details}. Figure \ref{1d_gaussian_heatmaps} visualizes the Task2Vec task diversities among the synthetic benchmarks via a heatmap showing the relative pairwise distance between sampled tasks.
}
\label{div_gaussian_table}
\end{table}

\begin{figure*}[h!]
\centering
\includegraphics[width=1.0\linewidth]{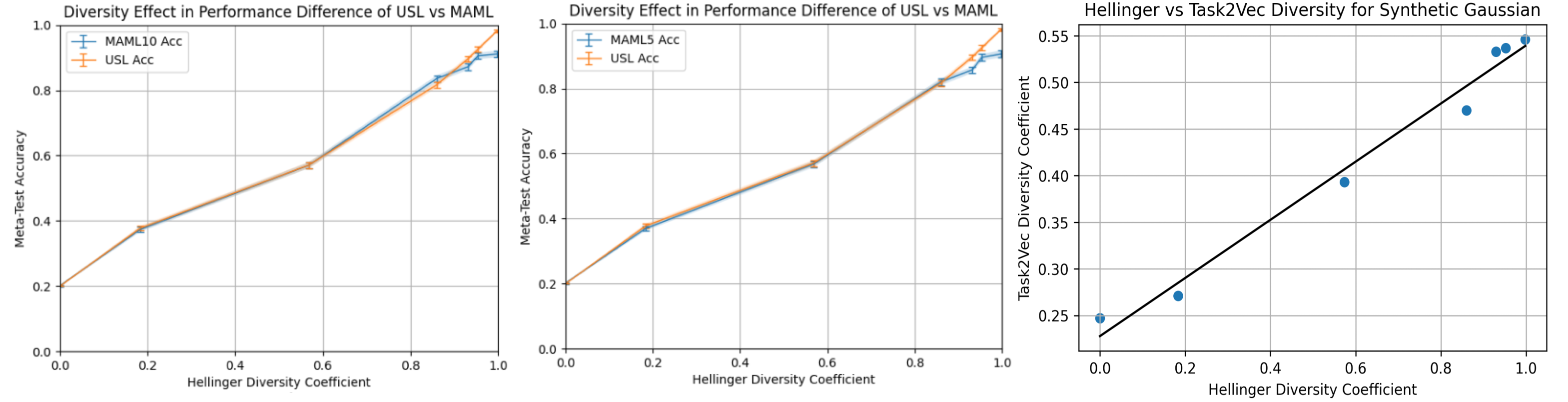}
\caption{
\textbf{
Shows the strong relation between Hellinger distribution diversity and the Task2Vec task diversity coefficients, as both coefficients may be used interchangeably as a measure for the diversity of a given synthetic Gaussian benchmark }
The first two plots show the relation between the Hellinger-based distribution diversity of a synthetic Gaussian benchmark and the benchmark's performance on the MAML5, MAML10, and USL methods. These first two plots are noticeably similar to Figure \ref{meta_test_vs_task2vec_div} (where Task2Vec-based task diversity was used as a measure of diversity instead of Hellinger-based distribution diversity), which indicates that our Hellinger-based distribution diversity metric also serves as a good proxy for task diversity. The rightmost plot shows a strong positive correlation between Hellinger-based distribution diversity and Task2Vec task diversity (Pearson $r = 0.990$).
}
\label{1d_gaussian_task2vec_plots}
\end{figure*}

\begin{figure*}[h!]
\centering
\includegraphics[width=1.0\linewidth]{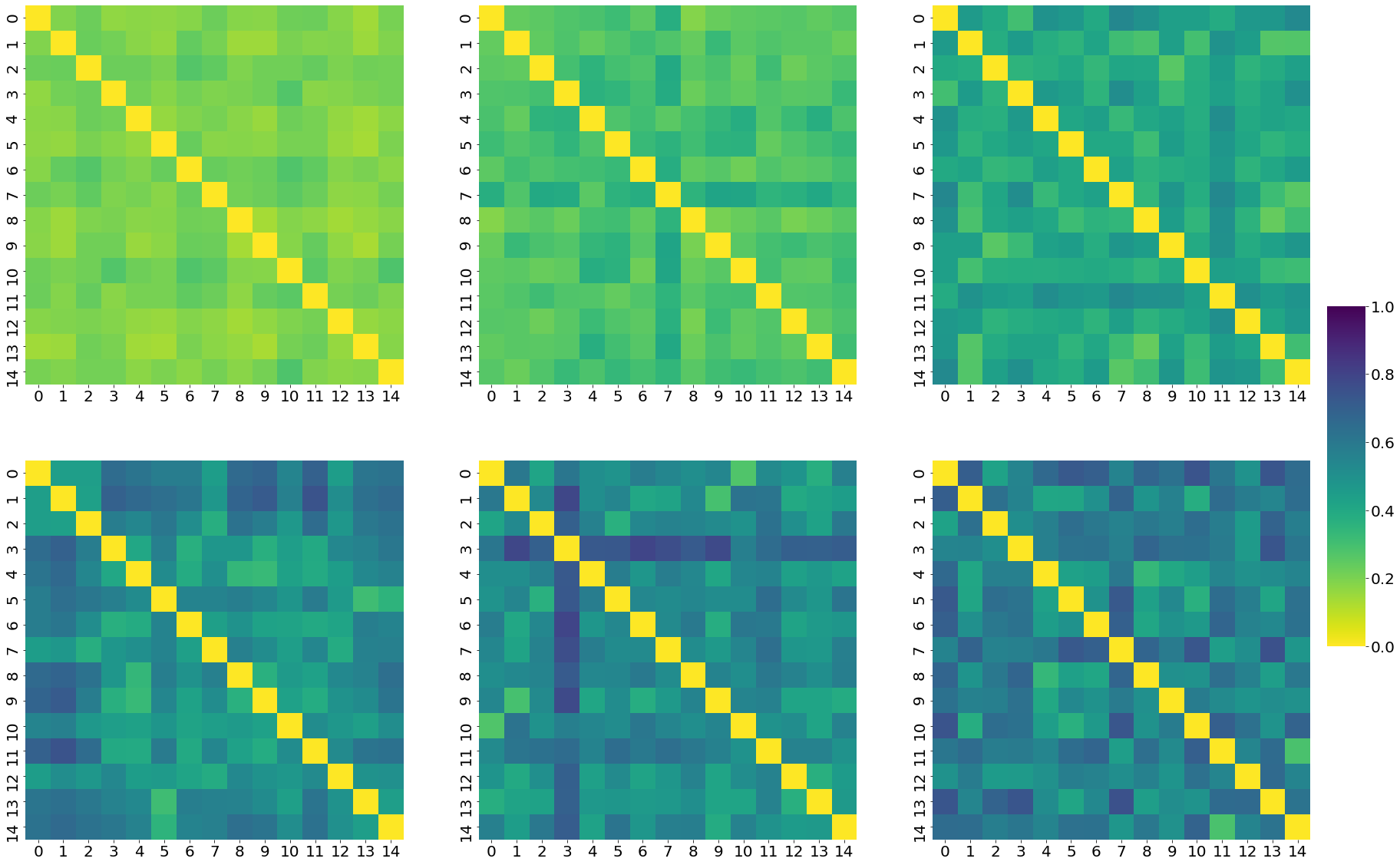}
\caption{
\textbf{
Heatmaps show how benchmarks with larger Task2Vec-based task diversity coefficient show more heterogeneity between sampled tasks }
Each heatmap below shows the pairwise distance between fifteen 5-way, 10-shot few-shot learning tasks sampled from the various synthetic Gaussian benchmarks described in Table \ref{div_gaussian_table}. Note that as $\sigma_m$ (the standard deviation of the class mean) increases, the distance between two tasks becomes larger on average and more varied, which can be seen as the heatmaps become more heterogeneous. This increase in expected distance among different tasks in turn increases the Task2Vec-based task diversity coefficient, which summarizes the average distance between any two tasks. From left to right, top to bottom, the benchmarks tested have parameters $\sigma_m = 0.01, 1, 3, 20, 30, 1000$  and Task2Vec-based task diversity coefficient parameters $div = 0.247, 0.271, 0.393, 0.533, 0.537, 0.546$.
}
\label{1d_gaussian_heatmaps}
\end{figure*}


\section{Distribution-based Diversity Metrics}\label{distribution_div}

In addition to the task diversity methods (such as Task2Vec) that we chose as a measure of diversity across our experiments in our main paper, we would like to introduce an additional class of diversity metrics that we call \emph{distribution diversity}. Unlike task diversity, which quantifies diversity through the expected distance between any two distinct tasks sampled from the benchmark, distribution diversity quantifies diversity through the expected distance between any two distinct \emph{distributions} that underlie the benchmark. In our synthetic Gaussian experiments, we define the distribution diversity of our Gaussian benchmark as the expected Hellinger distance between two distinct Gaussian class distributions sampled from the benchmark - we describe the calculation of the distribution diversity of our synthetic Gaussian benchmark in more detail in Section \ref{hellinger_distance}.

\section{Hellinger Diversity Coefficient and Hellinger Distance}\label{hellinger_distance}

An alternative metric to the Task2Vec-based task diversity metric is the Hellinger-based distribution diversity metric. The Hellinger-based distribution diversity of our Gaussian benchmark is obtained by computing the expected Hellinger distance between any two classes sampled from the benchmark. That is, for some benchmark parameterized by $B = (\mu_m, \sigma_m, \mu_s, \sigma_s)$, the diversity coefficient is given by 
$$div(B) = \mathbb{E}_{\mu_1,\mu_2 \sim N(\mu_m, \sigma_m)} \mathbb{E}_{  \sigma_1, \sigma_2 \sim \vert N(\mu_s, \sigma_s) \vert} [ H^2(N(\mu_1,\sigma_1),N(\mu_2,\sigma_2))]$$ 
where $H^2$ denotes the squared Hellinger distance metric and $N(\mu_1, \sigma_1), N(\mu_2,\sigma_2)$ denote the distributions of the two classes sampled from the benchmark. The Hellinger-based distribution diversity metric provides an intuitive, model-agnostic characterization of the diversity of a benchmark - the larger the diversity, the less similar any two classes within the benchmark are, and the easier it is to distinguish between two classes. Conversely, the lower the diversity, the more similar any two classes within the benchmark are, and the harder it is to distinguish between two classes due to a larger overlap between the two classes' distributions. \\

Note that the closed-form equation for the Hellinger distance between the two class distributions $N(\mu_1, \sigma_1), N(\mu_2,\sigma_2)$ is given by 
$$H^2(N(\mu_1, \sigma_1), N(\mu_2,\sigma_2)) = 1 - \sqrt{\frac{2\sigma_1\sigma_2}{\sigma_1^2 + \sigma_2^2}}e^{-\frac{1}{4}\frac{(\mu_1-\mu_2)^2}{\sigma_1^2+\sigma_2^2}}$$ However, there is no simple closed-form equation for computing the diversity $div(B)$ itself. As a result, we computed the diversity coefficient as a numerical approximation by repeatedly sampling two classes from the benchmark distribution and calculating the Hellinger distance between the two classes. These samples ultimately provide a 95\% confidence interval that represents the expected Hellinger distance between two classes sampled from the benchmark. \\

We also compared our Hellinger-based distribution diversity coefficient with the Task2Vec-based task diversity coefficient for each of the synthetic Gaussian benchmarks tested in Table \ref{div_gaussian_table}. We observe a strong positive correlation between the Hellinger-based distribution diversity and Task2Vec-based task diversity coefficients according to Figure \ref{1d_gaussian_task2vec_plots}, indicating that the Hellinger-based distribution diversity serves as a effective proxy for task diversity when the number of ways and shots of all tasks are fixed.

\section{Analysis of distribution of task distances in few-shot learning benchmarks}

\subsection{Heat Maps show Low Diversity and Homogeneity of tasks from MiniImagenet and Cifar-fs}\label{heat_maps}
In this section, we show the heat maps showing the distances between 5-way, 20-shot few-shot learning tasks from MiniImagenet and Cifar-fs in figure \ref{heat_map_resnets_divs_mi} and \ref{heat_map_resnets_divs_cifarfs}.
We used 20-shots because we do not need to separate the data into support and query set to compute the diversity coefficient.
We show that tasks sampled from these benchmarks create not only a low diversity coefficient on average, but also at the level of individual distances between pairs of tasks. 
In addition, the heat map's uniform coloring reveals that it is also justifiable to call the tasks from these benchmarks \textit{homogeneous}.
Low diversity is shown because the distances are between 0.07-0.12 given that max is 1.0 and minimum is 0.0.

\begin{figure*}[h!]
\centering
\includegraphics[width=0.7\linewidth]{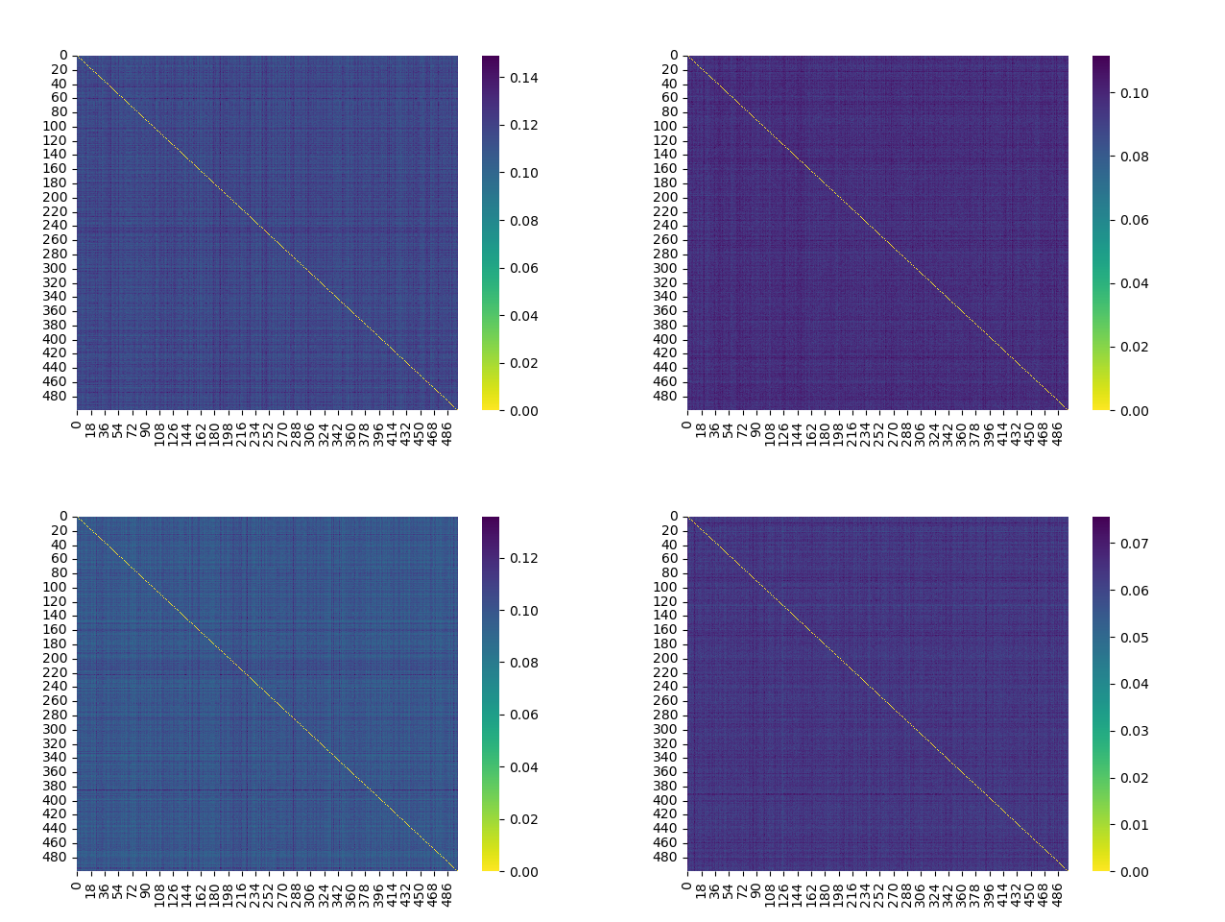}
\caption{
\textbf{Shows homogeneity and low diversity of 5-way, 20-shot tasks from MiniImagenet using the Task2Vec distance \citep{task2vec}.}
The top left heat map uses a Resnet18 pre-trained on Imagenet to compute the Task2Vec distance between tasks.
The top right heat map uses a Resnet18 with random weights to compute the Task2Vec distance between tasks.
The bottom left heat map uses a Resnet34 pre-trained on Imagenet to compute the Task2Vec distance between tasks.
The bottom right heat map uses a Resnet34 with random weights on Imagenet to compute the Task2Vec distance between tasks.
Homogeneity is shown because of the uniform color shown in the heat map. 
Low diversity is shown because the distance is between 0.07-0.12 given that max is 1.0 and minimum is 0.0.
Note the diagonal is exactly zero because it is comparing the same tasks.
The axis indices indicate the arbitrary name for the tasks.
Indices between heat maps do not indicate the same task.
We used the cosine distance between task Task2Vec embeddings.
}
\label{heat_map_resnets_divs_mi}
\end{figure*}

\begin{figure*}[h!]
\centering
\includegraphics[width=0.7\linewidth]{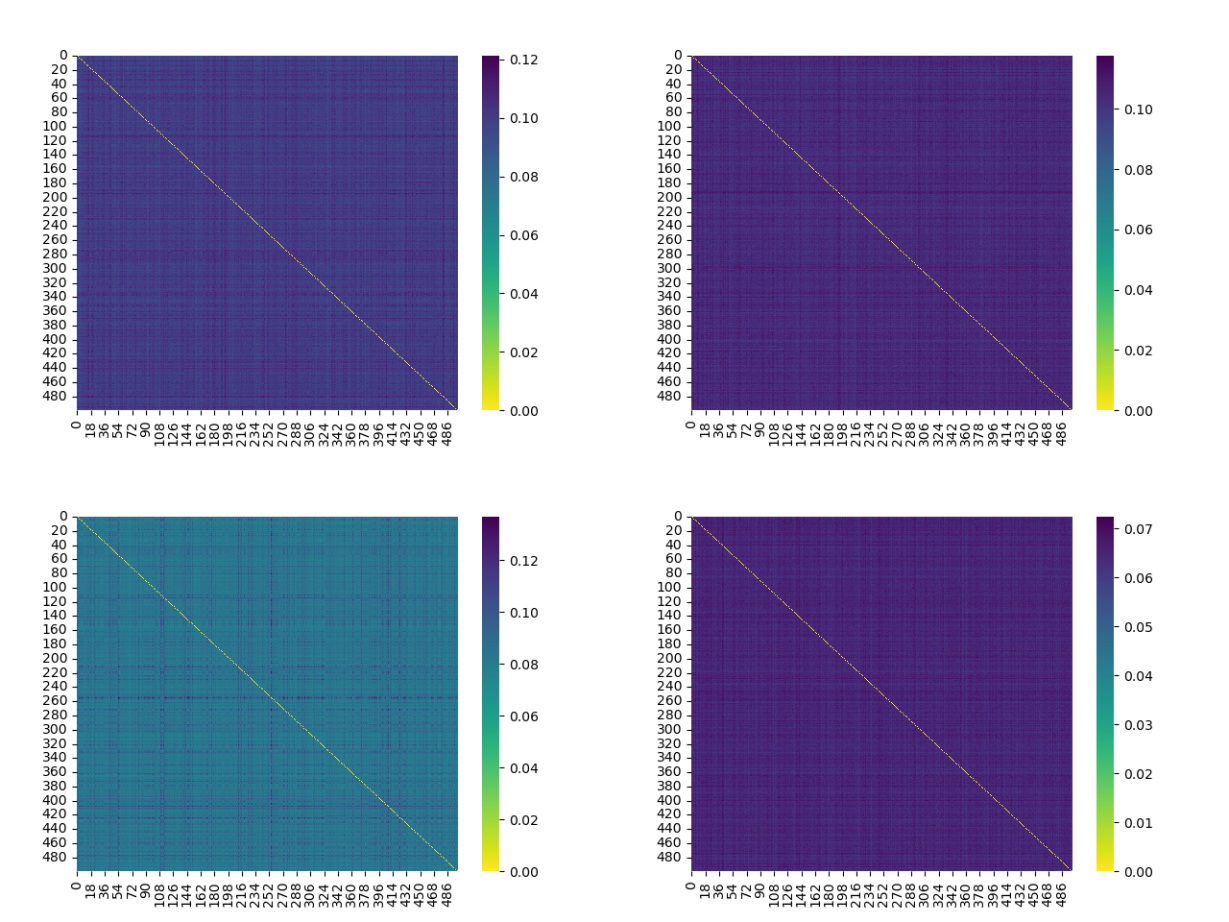}
\caption{
\textbf{Shows homogeneity and low diversity of 5-way, 20-shot tasks from Cifar-fs using the Task2Vec distance.}
The top left heat map uses a Resnet18 pre-trained on Imagenet to compute the Task2Vec distance between tasks.
The top right heat map uses a Resnet18 with random weights to compute the Task2Vec distance between tasks.
The bottom left heat map uses a Resnet34 pre-trained on Imagenet to compute the Task2Vec distance between tasks.
The bottom right heat map uses a Resnet34 with random weights on Imagenet to compute the Task2Vec distance between tasks.
Homogeneity is shown because of the uniform color shown in the heat map. 
Low diversity is shown because the distance is between 0.07-0.12 given that max is 1.0 and minimum is 0.0.
The axis indices indicate the arbitrary name for the tasks.
Indices between heat maps do not indicate the same task.
We used the cosine distance between task Task2Vec embeddings.
}
\label{heat_map_resnets_divs_cifarfs}
\end{figure*}

\subsection{Histograms of distances of tasks in the synthetic Gaussian Benchmark, MiniImagenet and Cifar-fs}\label{dist_hist}

In this section, we show the histograms of the cosine distances between pairs of tasks for the Gaussian Benchmark, MiniImagenet and Cifar-fs.
The main purpose of this is to argue that a (relatively small) sample of the tasks is sufficient to estimate population statistics -- like the expected distance between tasks i.e. diversity coefficient.

For ease exposition of the argument, consider the case where we have $500$ distance from a large population of size ${64 \choose 5} = 7,624,512$.
The goal is to argue that $500$ samples are enough to make strong statistical inferences about the population -- even if it’s as large as $7,624,512$. 
If we assume the distribution of the data is Gaussian, then we expect to see a single mode with an approximate bell curve. 
Therefore, if we plot the histogram of task pair distances of the $500$ tasks and see this then we can infer our Gaussian assumption is approximately correct. 
Given that we do see that in figures \ref{hist_resnets_divs_cifarfs}, \ref{hist_resnets_divs_mi}, \ref{hist_resnets_divs_gau} then we can infer our assumption is approximately correct. 
This implies we can make strong statistical assumptions about the population – in particular, that we have a good estimate of the diversity coefficient using $500$ samples.
Additionally, in histograms also discard the presence of outlier tasks.


\begin{figure*}[h!]
\centering
\includegraphics[width=0.7\linewidth]{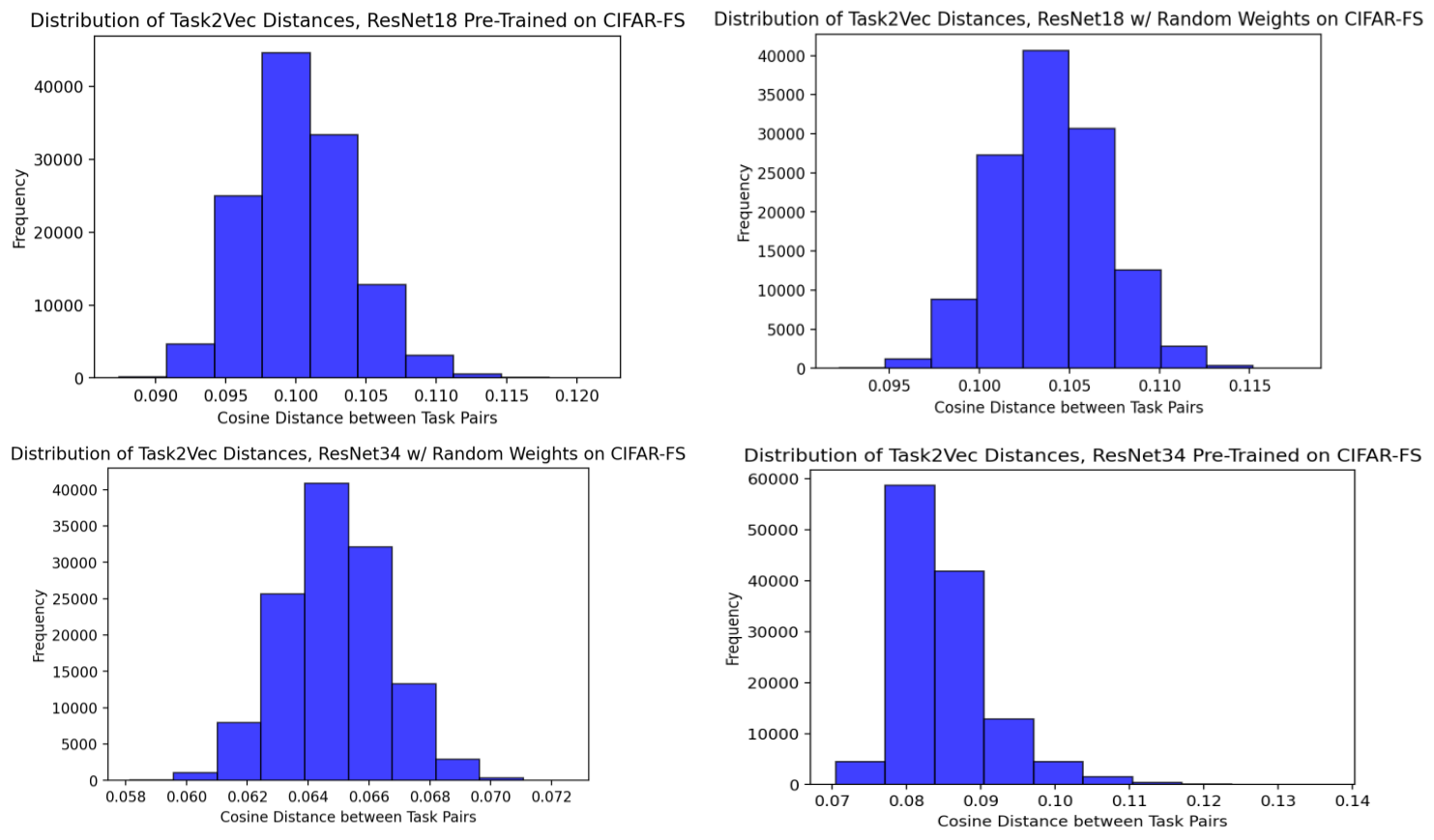}
\caption{
\textbf{Histogram of distances of 5-way, 20-shot tasks from Cifar-fs using the Task2Vec distance.
This plot justifies the use of a subsample of the population to estimate the diversity coefficient because of its approximate Gaussian distribution.}
For the full argument, see the main text, section \ref{dist_hist}.
The top left histogram uses a Resnet18 pre-trained on Imagenet to compute the Task2Vec distance between tasks.
The top right histogram uses a Resnet18 with random weights to compute the Task2Vec distance between tasks.
The bottom left histogram uses a Resnet34 pre-trained on Imagenet to compute the Task2Vec distance between tasks.
The bottom right histogram uses a Resnet34 with random weights on Imagenet to compute the Task2Vec distance between tasks.
}
\label{hist_resnets_divs_cifarfs}
\end{figure*}

\begin{figure*}[h!]
\centering
\includegraphics[width=0.7\linewidth]{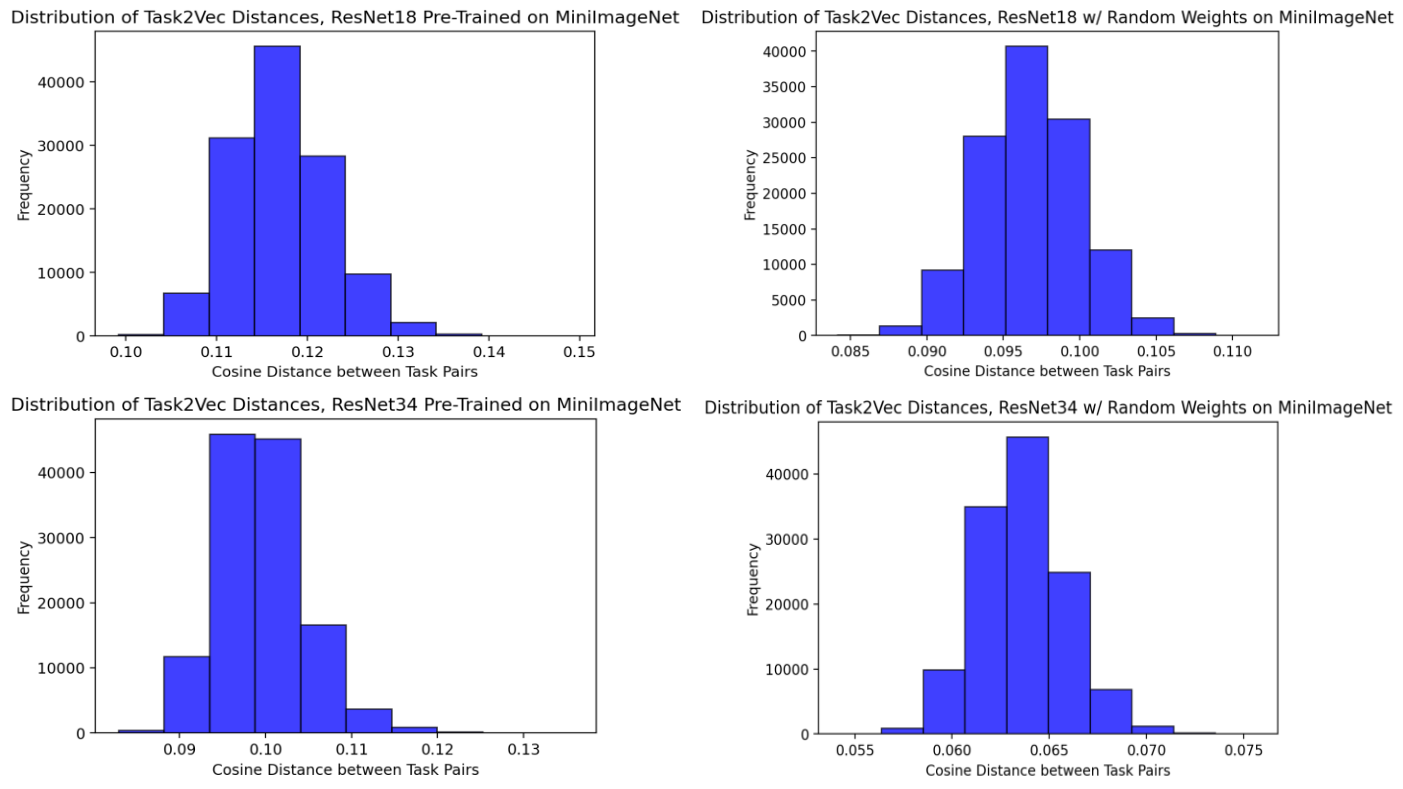}
\caption{
\textbf{Histogram of distances of 5-way, 20-shot tasks from MiniImagenet using the Task2Vec distance.
This plot justifies the use of a subsample of the population to estimate the diversity coefficient because of its approximate Gaussian distribution.}
For the full argument, see the main text, section \ref{dist_hist}.
The top left histogram uses a Resnet18 pre-trained on Imagenet to compute the Task2Vec distance between tasks.
The top right histogram uses a Resnet18 with random weights to compute the Task2Vec distance between tasks.
The bottom left histogram uses a Resnet34 pre-trained on Imagenet to compute the Task2Vec distance between tasks.
The bottom right histogram uses a Resnet34 with random weights on Imagenet to compute the Task2Vec distance between tasks.
}
\label{hist_resnets_divs_mi}
\end{figure*}

\begin{figure*}[h!]
\centering
\includegraphics[width=0.7\linewidth]{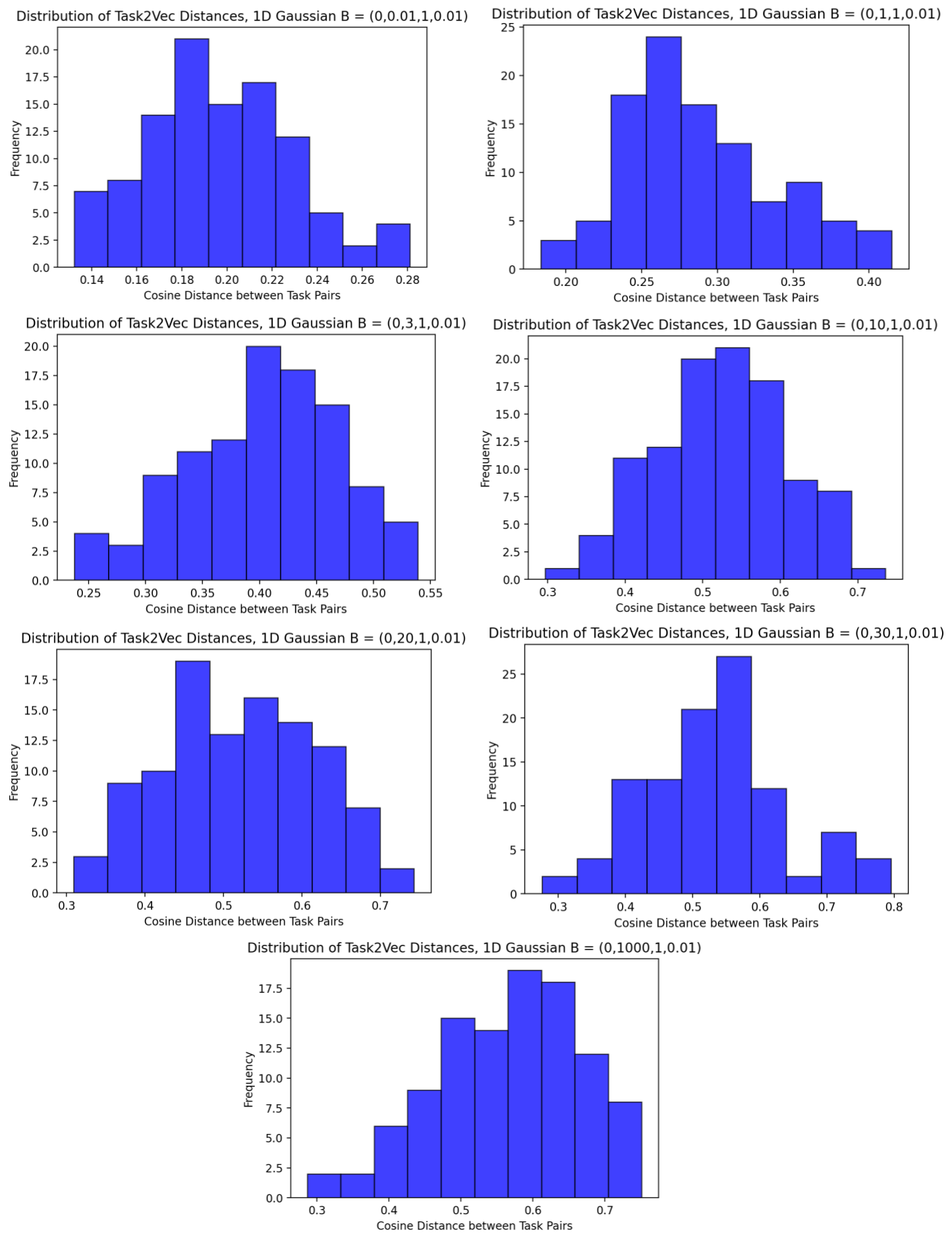}
\caption{
\textbf{Histogram of distances of 5-way, 20-shot tasks from the Synthetic Gaussian benchmark using the Task2Vec distance.
This plot justifies the use of a subsample of the population to estimate the diversity coefficient because of its approximate Gaussian distribution.}
For the full argument, see the main text, section \ref{dist_hist}.
The meta parameters generating tasks for each benchmark are denoted by $B=(0, x, 1, 0.01)$ where $x$ is in the list $[0.01, 1, 3, 10, 20, 30, 1000]$ indicating the mean to generate the mean of the Gaussian tasks.
For full details of the synthetic Gaussian benchmark, see section \ref{hellinger_distance}.
}
\label{hist_resnets_divs_gau}
\end{figure*}

\section{Background on distance metrics }\label{cca_background}

\subsection{Neuron Vectors}
The representation of a neuron $d$ in layer $l$ is the vector $z^{(l)}_d(X) \in \mathbb R^{N}$ of activations for a set of $N$ examples, where $X \in \mathbb R^{N, D}$ is the data matrix with $N$ examples.

\subsection{Layer Matrix}

A layer matrix $L$ for layer $l$ is a matrix of neuron vectors $z^{(l)}_d(X) \in \mathbb R^{N}$ with shape, $[N, D_i]$ i.e. $L \in \mathbb R^{N, D_i}$.
In other words, the layer matrix $L$ is the subspace of $\mathbb R^N$ spanned by its neuron vectors $z^{(l)}_d(X)$.
In short, $L$ is the layer matrix $ [z^l_d; \dots; z^l_{D_1}] \in \mathbb R^{N, D_i}$ with neuron vector $z^l_d$.

\subsection{CCA}

Canonical Correlation Analysis (CCA) is a well established statistical technique for comparing the (linear) correlation of two sets of random variables (or vectors of random variables). 
In the empirical case, however, one computes the correlations between two sets of data sets (e.g. two matrices $X \in \mathbb R^{N, D_1}$ and $Y \in \mathbb R^{N, D_2}$ with $N$ examples and $D_1, D_2$ features or layer matrices).

\textbf{True distribution based Canonical Correlation Analysis (CCA): } What we call true distribution based CCA is the standard CCA measure using the true but known distribution of the data $p^*(x)$ and $p^*(y)$.
In this case, CCA searches for a pair of linear combinations $a^*, b^*$ of two set of random variables (or vectors of random variables) $\boldsymbol{x} = [X_1, \dots, X_{D_1}]$ and  $\bm{y} = [Y_1, \dots, Y_{D_2}]$ that maximizes the Pearson correlation coefficient:
$$ a^*, b^* = 
arg \max_{a, b} \frac{\mathbb E_{X, Y}[ (a^\top \bm x)((b^\top \bm y)) ]}{ \sqrt{ \mathbb E_{X}[ (a^\top \bm x)^2 ]} \sqrt{ \mathbb E_{Y}[ (a^\top \bm y)^2 ]} } 
= arg \max_{w_1, w_2} \frac{a^\top \Sigma_{X, Y} b}{ \sqrt{a^\top \Sigma_{X, X} a} \sqrt{b^\top \Sigma_{Y, Y} b} }$$
where $\Sigma_{X, Y}, \Sigma_{X, X} \Sigma_{Y, Y}$ are the (true) covariance and variance matrices respectively (e.g. $\Sigma_{X, Y}[i, j] = Cov[X_i, X_j] = \mathbb [X_i Y_j]$ for centered random variables).
All of these can be replaced by empirical data matrices in the obvious way.





\subsection{SVCCA}

At a high level, SVCCA is a similarity measure of two matrices that aims in removing redundant neurons (i.e. redundant features) with the truncated SVD by keeping 0.99 of the variance and then measure the overall similarity by averaging the top $C$ CCA values.

\textbf{SV: } Given two matrices $L_1 \in \mathbb R^{N, D_1}, L2 \in \mathbb R^{N, D_2}$ (e.g. layer matrices) first reduce the effective dimensionality of the matrix via a low rank approximation $L_1' \in \mathbb R^{N, D'_1}, L2' \in \mathbb R^{N, D'_2}$ by choosing the top $k$ singular values that keeps 0.99 of the variance.
In particular, for each layer matrix, $L_i$ keep the top $D'_i$ singular values (and vectors) such that $\sum^{D'_i}_{j=1} |\sigma_j| \geq 0.99 \sum^{rank(L_i)}_{j=1} |\sigma_j|$.

\textbf{SVCCA: } SVCCA is a statistical technique for the measuring the (linear) similarity of two sets of data sets $L_1 \in \mathbb R^{N, D_1}, L2 \in \mathbb R^{N, D_2}$ (e.g. data matrices, layer matrices) by first reducing the effective dimensionality of the matrix via a low rank approximation $L_1' \in \mathbb R^{N, D'_1}, L_2' \in \mathbb R^{N, D'_2}$ (e.g. by choosing the top $k$ singular values that keeps 0.99 of the variance) and then applying the standard empirical CCA to the resulting matrices. 
This is repeated $C = \min(D'_1,  D'_2)$ times and the overall similarity of the two matrices is computed as the average CCA: 
$svcca = sim(L_1', L_2') = \frac{1}{C} \sum^C_{c=1} \rho_{c}$

Concretely:
\begin{enumerate}
    \item Get the $D'_i$ components that keep $0.99$ of the variance (i.e. $D'_i$ such that $\sum^{D'_i}_{j=1} |\sigma_j| \geq 0.99 \sum^{rank(L_i)}_{j=1} |\sigma_j|$ ).
    \item Get the SVD: $U_1, \Sigma_1, V_1^\top = SVD(L_1)$ and $U_2, \Sigma_2, V_2^\top = SVD(L2)$
    \item Then produce the SVD dimensionality reduction by $L_1' = L_1 V_1[1:k_i] \in \mathbb R^{N, D_1}$ and $L_2' = L_2 V_2[1:k] \in \mathbb R^{N, D_2}$ where $V_i[1:D_i]$ gets the top $D_i$ columns of a layer matrix $i$.
    \item Get the CCA of the reduced layer matrix: $[\rho_c]^C_{c=1} = CCA(L_1', L_2')$ where, $C = \min(D'_1,  D'_2)$
    \item Finally return the mean CCA: $svcca = \frac{1}{C} \sum^C_{c=1} \rho_{c}$, where is the k-th CCA value of the reduced layer matrix.
\end{enumerate}

\subsection{PWCCA}\label{pwcca}

At a high level, PWCCA was developed to increase the robustness (to noise) of SVCCA in the context of deep neural networks. 
In particular, Maithra et al. \citep{pwcca} noticed that when the performance of the neural networks stabilized, so did the set of CCA vectors (or principle neuron vectors) related to the network stabilized on the data set in question.
Thus, they suggest to give higher weighting to the canonical correlation $\rho_c$ of these stable CCA vectors -- in particular to the ones that are similar to the final output layer matrix, e.g. $L_1$. 
Note this is simpler than trying to track the stability of these CCA vectors during training and then give those higher weighting. 


\textbf{PWCCA: } Formally let $L_1$ be the layer matrix $ [z^l_d; \dots; z^l_{D_1}] \in \mathbb R^{N, D_i}$ with neuron vectors $z^l_d$ for some layer $l$.
Recall that the k-th left CCA vector for layer matrix $L_1$ is defined as follows, $\tilde x_c = L_1 a_c = L_1 (\Sigma^{-\frac{1}{2}} u_c)$ where $a_c$ is the cth CCA direction and $u_c$ is the c-th left singular value from the matrix $M = \Sigma^{-\frac{1}{2}}_{L_1} \Sigma_{L_1, L_2} \Sigma^{-\frac{1}{2}}_{L_2} = U \Lambda V^\top$.
Then, PWCCA can be computed as follows:
\begin{enumerate}
    \item Calculate the CCA vectors $\tilde x_c = L_1 a_c = L_1 (\Sigma^{-\frac{1}{2}}_{L_1} u_c)$ and explicitly orthonormalize with Gram-Schmidt for numerical stability.
    \item Compute the weight $\tilde \alpha_c$ of how much the layer matrix $L_1$ is account for by each CCA vector $\tilde x_k$ with equation $\tilde \alpha_c(h_c, L_1) = \sum^C_{c=1} | \langle \tilde x_c, z^l_c \rangle_{\mathbb R^N} |$ where $z^l_c$ is the c-th column of the layer matrix $L_1$
    \item Normalize the weight indicating how much each CCA vector $h_c$ accounts for $L_1$ and denote it with, $\alpha_c(\tilde x_k, L_1) = \frac{\tilde \alpha_c(\tilde x_k, L_1)}{\sum^{C}_{c=1} \alpha_c(\tilde x_k, L_1) }$
    \item Finally return the mean CCA weighted by $\alpha_c(\tilde x_k, L_1)$: $pwcca = \sum^C_{c=1} \alpha_c(\tilde x_k, L_1) \rho_c$ where $C = \min(D'_1,  D'_2)$.
\end{enumerate}

The original authors could have used the right CCA vectors, i.e. $\tilde y_c = L_2 b_k = L_2 (\Sigma^{-\frac{1}{2}} v_k)$
and in fact the details of their code suggest they choose the one that would have lead to less values removed by SVD.
This choice seems to already be robust to noise, as shown in \citep{pwcca}.
Note that the CCA vectors $\tilde x_k, \tilde y_k$ are of size $\mathbb R^N$ and thus could be viewed as the principle neuron vectors that correlate two layers $L_1, L2$.
With this view, PWCCA computes the mean CCA normalized by of the $c$ principle neuron vectors are account for the output layer matrix the most. 



\subsection{CCA for CNNs}\label{cca_for_cnn}
The input to CCA are two data matrices, but CNNs have intermediate representations that are 4D tensors.
Therefore, some justification is needed in how to create the data matrices needed for computing CCA for CNNs.
Note that it's the same reasoning for both SVCCA and PWCCA.

\textbf{Each channel as the dimensionality of the data matrix:} 
One option is to get the intermediate representation of size $[M, C, H, W]$ and get a layer matrix of size $[MHW, C]$.
Thus, $MHW$ is the effective number of data points and the channels (or number of filters) is the effective dimensionality of the (layer) matrix.
In this view, each patch of an image processed by the CNN is effectively considered a data point.
This view is very natural because it also considers each filter as its own ``neuron" -- which seams reasonable considering that each filter uniquely responds to each stimulus (e.g., data patch).
This view results in $HW$ images for every sample in the data set (or batch) of size $M$ and $C$ effective neurons.

Although the original authors suggest this metric as a good metric mainly for comparing two layers that are the same -- we hypothesize it is also good for comparing different layers (as long as the effective number of data points match for the two layer matrices).
The reason is that CCA tries to compute the maximum correlation of two data sets (or sets of random variables) and assumes no meaning in the ordering of the data points and assumes no process for generating each individual sample for the set of random variables, thus meaning that this metric (CCA) can be used for any two layers in a matrix.
Overall, in this view, we are comparing the representation learned in each channel.

\textbf{Each activation as the dimensionality of the data matrix:}
One option is to get the intermediate representation of size $[M, C, H, W]$ and get a layer matrix of size $[M, CHW]$.
Thus, $M$ is the effective number of data points (which matches the number of samples in the data set or batch) and therefore each activation value is the effective dimensionality of the (layer) matrix.
In this view, each activation is viewed as a neuron of size $M$ and we have $CHW$ effective neurons for each activation.
The authors suggest this metric for comparing different layers (potentially at different depths).
However, because CCA assumes no correspondence between the data points nor the same dimensionality in the data matrix -- we hypothesize this way to define the data matrix is as valid as the previous definition for comparisons between any models at any layer.
One disadvantage however is that it will often result in data matrices that are very large due to $CHW$ being very large -- which results in artificially high CCA similarity values. 
Potential ways to deal with it are noticing that there is no correspondence between the data matrices, so a cross comparison of every data point with every other data point in CCA is possible (resulting is $O(M^2)$ comparisons for the empirical covariance matrix).
Alternatively one can pool in the spatial dimensions $[H, W]$ resulting in potential smaller layer matrices e.g. of shape $[M, C]$ with a pool over the entire spatial dimension.
For these reasons and the fact that we hypothesize an image patch being its own image -- we prefer to interpret the number of channels as the natural way to compare CNNs so that the layer matrices results of size $[MHW, C]$.

\textbf{Subsampling of representations for channels as dimensionality:}
In this section, we review the subsampling we did when comparing the representations learned in each channel, i.e. the layer matrix has size $[MHW, C]$.
The effective number of data points $MHW$ will often be much larger than needed (e.g. for 16 data samples $M=16$ and $H=W=84$ results in $MHW = 112,896$), especially compared to the number of filters/channels (e.g. $C=64$).
Previous work \citep{svcca, pwcca} suggest using the number of effective data points to be from 5-10 times the size of the dimensionality in a layer matrix of size $[N', D']$ that means $N'=10D'$.
Based on our reproductions of that number, we choose $N'=20D'$ which results in $NHW = 20C$

\subsection{Centered Kernel Alignment (CKA)}

At a high level, CKA is based on the insight that one can first measure the similarity between every pair of examples in each representation separately and then use the similarity structure to compute an overall similarity metric.
In our case, we can treat the examples as the neuron vectors and compare all neuron vectors using some kernel function.
Usually this will end in a kernel matrix of size $M, M'$ where $M$ and $M'$ are the number of examples. 
In our case, they would be $D, D'$ for the number of neurons of each layer matrix.
Note, the layers matrices can correspond to neurons of different layers in a neural network.

\textbf{Linear CKA:}
We use the linear kernel function as used in previous work \citep{cka, opd}.
Given two layer matrices $X_1 \in \mathbb R^{N, D_l}$ and $X_2 \in \mathbb R^{N, D_{l'}}$ for layers $l, l'$, we compute the linear kernel $X^{\top}_1 X_2$ to get the $D_l$ by $D_{l'}$ kernel matrix indicating the (linear) similarity per neuron vector for the two layers.
Then to obtain a single distance value we compute the Frobenius norm of the kernel matrix and subtract by one after normalization:  
\begin{equation}
    d_{linear CKA}(A, B) = 1 - \frac{\| A^{\top} B\|_F}{\| A^{\top} A\|_F \| B^{\top} B\|_F}
\end{equation}
Note that depending on how the examples in matrices $A, B$ are organized the cross-product could be computed with $A B^{\top}$ instead.
Other kernel functions have been tested (e.g., the RBF kernel) for CKA but similar results are obtained, resulting in linear CKA being the most popular CKA method \citep{opd, cka} to the best of our knowledge.

\subsection{Orthogonal Procrustes Distance (OPD)}

At a high level, the orthogonal Procrustes distance computes the distances between two matrices after using for the best orthogonal matrix that tries to match the two.
Usually this is done after centering and dividing by the Frobenious the matrices, i.e. normalizing the matrices.
In addition, previous work \citep{opd} finds that OPD is a better metric at detecting changes that matter functionally and robust against changes that do not matter.

\textbf{OPD:}
Formally, the Orthogonal Procrustes Distance is the smallest distance between two matrices $X$ and $Y$ (with columns as the vectors in question) found by finding the orthogonal matrix $Q$ which most closely maps $A$ to $B$.
Therefore, the OPD distance is the distance value from solving the orthogonal Procrustes problem:
\begin{equation}
    d_{OPD}(X, Y) = \min_{Q} \| X - Y Q\|_F
\end{equation}
where $\| \cdot \|_F$ is the Frobenius norm.
When matrices are normalized (centered and divided by their Frobenious norm) this is called the general Procrustes problem.
However, the closed for equation we used is the following:
\begin{equation}
    d'_{OPD}(X, Y) = \frac{1}{2}\left(\| X \|_F + \| Y \|_F - 2 \| X^\top Y \|_{*}\right)
\end{equation}
where $ \| \cdot \|_{*}$ is the nuclear norm, i.e. the sum of singular values $\sum_i \sigma(A)_i = \| A \|_{*}$ .
The division by 2 is to guarantee that the OPD distance is between $[0,1]$ instead of $[0, 2]$.
We do the standard normalization of the matrices before computing the OPD distance -- by centering and dividing by the Frobenious norm of the matrix. 
This is done because the orthogonal matrix in the orthogonal Procrustes problem does not allow for translation or rescaling of the matrices. 
Therefore, this normalization enforces invariance to this type of transformations -- i.e. we don't want large OPD values due to rescaling or translation (and even if present, the orthogonal matrix wouldn't be able to reflect it).

Therefore, the final equation for OPD we use is:

\begin{equation}
    d_{OPD}(X, Y) = 1 - \| X^\top Y \|_{*}
\end{equation}

\textbf{Why OPD?}
We use OPD due to the findings of \citep{opd}.
They find that OPD is a more robust metric (compared to SVCCA, PWCCA, and CKA) because it is \textit{sensitive} to changes that affect real functional behavior (so it detects changes to behavior that ``matter") and it's \textit{specific} against changes that do not.
As a summary, some of the evidence that they provide for this is that OPD is able to detect when 0.75 of the principal components are removed, while CKA cannot detect removal of principal components until 0.97 are removed.
CCA like metrics on the other hand are not specific -- even random initialization noise overwhelms the distances it reports, while OPD is more robust to this random noise.
For the last point, this means that even if we compare two different layers with CCA, the noise will dominate the distance reported instead of the difference caused by comparing different layer. 

\subsection{Correctly using Feature Based Distances}

When comparing two layers of a neural network using two layer matrices, one needs to be careful with the number of data points (or batch size) being used. 
This is because metrics like CCA intrinsically are formulated as an optimization and if the number of examples is not larger than the number of dimensionality of the examples -- then the similarity can be pathologically be perfect (e.g., the distance is zero when it's actually not zero). 
Therefore, we follow the suggestions by the original authors of SVCCA \citep{svcca} and always use at least 10 times more examples than there are features for our feature based comparisons. 
We call this value the \textit{safety margin}.
To illustrate this idea, we produce two random matrices and compare how the similarity (SVCCA) values varies as a function of the dimensionality of the data and the number of points.
Since the two matrices are completely random, we know they should not be very similar and thus SVCCA should report a high similarity value (or low distance value).
Therefore, we can see in figure \ref{dim_svcca} how as the dimensionality increases, the similarity value approaches a perfect similarity of 1.0.
In figure \ref{num_points_svcca} we can see how as the number of points increases, we approach a smaller similarity -- closer to the true similarity for random matrices. 

In general, given two matrices $X, Y \in \mathbb R^{M', D'}$ with the number of (effective) data points $M'$ and (effective) dimensionality (number of features) $D'$ -- we want the number of points to be larger by a safety factor $s$.
Formally, it must satisfy this inequality to avoid the pathological case for feature based distances:
\begin{equation}
    D' \leq s M'
\end{equation}
where we suggest to use $s \leq 10$ (as used in previous work \citep{svcca}).
Note the effective number of data points used and dimensionality can be different depending on how one reshapes the CNN tensors to produce layer matrices as explained in section \ref{cca_for_cnn}.
For example, if one uses the channel as the dimensionality (i.e., use the image patches as an effective data point) then one has to obey the following inequality:
\begin{equation}
    C \leq s MHW
\end{equation}
where $M$ is the batch size, $H,W$ is the height and width of the images, and $C$ is the number of filters/channels for the current layer.
This means that for a given architecture processing images of a given size that the only parameter we can change to make the above inequality true is the batch size $M$.

\begin{figure*}[h!]
\centering
\includegraphics[width=0.5\linewidth]{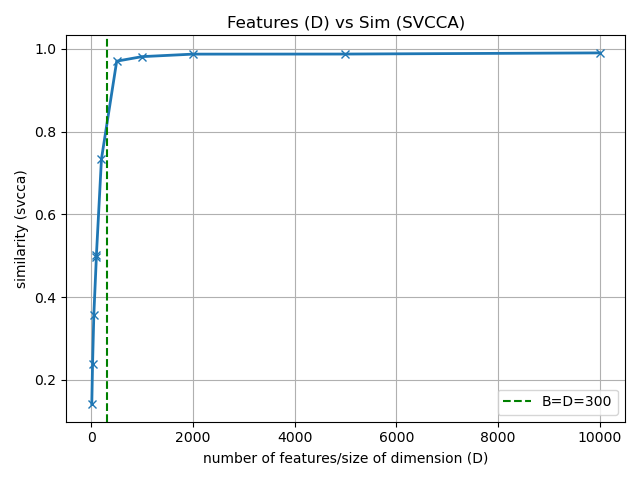}
\caption{
\textbf{Shows that as the dimensionality of a random data matrix increases -- the SVCCA similarity approaches the pathological case by falsely reports the similarity is perfect.}
The green line indicates when the number of examples and dimensionality are equal (and equal to 300).
D denotes the dimensionality of the simulated data and B the size of the batch size/number of points.
}
\label{dim_svcca}
\end{figure*}

\begin{figure*}[h!]
\centering
\includegraphics[width=0.5\linewidth]{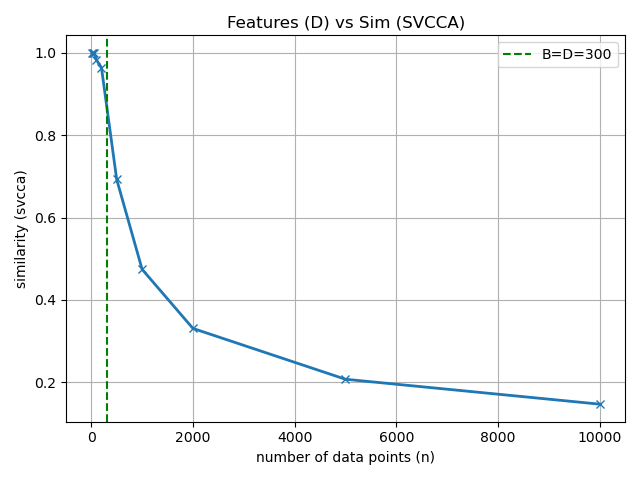}
\caption{
\textbf{Show how to avoid the pathological case when using feature based similarities by increasing the number of data points (or batch size).}
In particular, as the number of data points in two random data matrix increases -- the true similarity approaches the true low similarity value.
The green line indicates when the number of examples and dimensionality are equal (and equal to 300).
D denotes the dimensionality of the simulated data and B the size of the batch size/number of points.
}
\label{num_points_svcca}
\end{figure*}


\section{A Statistical Decision view of the differences between Supervised Learning and Meta-learning}\label{statistical_decision_theory_for_meta_learning}


Recent work in meta-learning implies that feature-reuse might be all we need to solve modern few-shot learning benchmarks \citep{rfs}.
However, what it also reveals is our poor understanding of meta-learning algorithms. 
Therefore, in this section, we take the most foundational perspective to formulate and analyze meta-learning algorithms by analyzing them from an optimal statistical decision theory perspective?

We hope that this can help clarify the results from \citep{rfs} and therefore help meta-learning researchers design better meta-learning benchmarks and meta-learning algorithms.

\subsection{Supervised Meta-Learning problem set-up}\label{meta_sl_problem_setup}

In this section, we introduce the notation for supervised meta-learning.
Intuitively, we seek to find a function that minimize the expected risk over tasks and the data in the tasks.
To formalize it, we will use three formulations:

\textbf{Monolithic meta-learner}: for a monolithic decision rule $g$ (or meta-learner), we want to find the optimal $g$ by minimizing the {\em supervised meta-learning expected risk}:
\begin{equation}\label{mono_meta_sl}
    R_{Mono}(g) = \mathbb E_{\tau \sim p(\tau)} \mathbb E_{x,y \sim p(x, y \mid \tau) } \left[ l( g(x, \tau), y) \right]
\end{equation}
where $g$ is a single monolithic function, 
$p(\tau)$ is the true but unknown distribution of tasks, $p(x,y \mid \tau)$ is the true, but unknown distribution of data pair given a task $\tau$ and $(x, y)$ is the data pair of input and target value sampled from a task.

\textbf{Meta-learned meta-learner}: for a meta-learned decision rule we usually have an adaptation rule $A$ (e.g. SGD in MAML) and a function approximator $h$ (e.g. a neural network) and minimize the follow over both:
\begin{equation}\label{ml_meta_sl}
    R_{ML}(A, h) = \mathbb E_{\tau \sim p(\tau)} \mathbb E_{x,y \sim p(x, y \mid \tau) } \left[ l( A(h, \tau)(x), y) \right]
\end{equation}
$p(\tau)$ is the true but unknown distribution of tasks, $p(x,y \mid \tau)$ is the true, but unknown distribution of data pair given a task $\tau$ and $(x, y)$ is the data pair of input and target value sampled from a task.

\textbf{Fixed representation meta-learner without adaptation}: 
one can also solve \ref{mono_meta_sl} using a single decision rule $f$ that does not take the task $\tau$ as input as follows:
\begin{equation}\label{sl_meta_sl}
    R_{SL}(f) = \mathbb E_{\tau \sim p(\tau)} \mathbb E_{x,y \sim p(x, y \mid \tau) } \left[ l(f(x), y) \right]
\end{equation}
where $f$ is a function to be adapted (e.g. a neural network), $p(\tau)$ is the true but unknown distribution of tasks, $p(x,y \mid \tau)$ is the true, but unknown distribution of data pair given a task $\tau$ and $(x, y)$ is the data pair of input and target value sampled from a task.

\textbf{Fixed representation meta-learner with a final adaptation layer}: 
one can also solve \ref{mono_meta_sl} using a single feature extractor $g$ that does not take the task $\tau$ as input with a feature extractor $g$:
\begin{equation}\label{mono_meta_sl_final_adapt}
    R_{SLA}(f, g) = \mathbb E_{\tau \sim p(\tau)} \mathbb E_{x,y \sim p(x, y \mid \tau) } \left[ l( (f(\tau) \circ g) (x), y) \right]
\end{equation}
where $g$ is the feature extractor from the raw inputs (e.g. a neural network),
$f$ the final layer adapted (e.g. a linear layer), 
$p(\tau)$ is the true but unknown distribution of tasks, $p(x,y \mid \tau)$ is the true, but unknown distribution of data pair given a task $\tau$ 
and $(x, y)$ is the data pair of input and target value sampled from a task.

\begin{remark}
Note that in practice, the meta-learner does not usually take the full task $\tau$ as input, but instead a train and test set (often referred to as support set and query set) sampled from the task $\tau$.
\end{remark}

The goal of this work is to clarify the difference between \ref{sl_meta_sl} and \ref{ml_meta_sl} under the framework of statistical decision theory.
Arguably the most important comparison between \ref{ml_meta_sl} and \ref{mono_meta_sl_final_adapt} is left for future work.

\subsection{Main Result: Difference between the Supervised Learned and Meta-learned decision rule}\label{sl_vs_ml}

The proof sketch is as follows: we first show the optimal decision rules for both supervised learning and meta-learning when minimizing the expected meta-risk from equations \ref{sl_meta_sl} and \ref{ml_meta_sl} and then highlight that the main difference between them is that the meta-learned solution can act optimally if it identifies the task $\tau$ while the supervised learned solution has no capabilities of this since it learns an average based on tasks instead.

\begin{theorem}\label{proof_ml_is_exp_y_eq}
The minimizer to equation \ref{ml_meta_sl} is:
\begin{equation}\label{ml_is_exp_y_eq}
    A(h, \tau)(x) = \bar y^*_{y \mid x, \tau} = \mathbb{E} _{y \sim p(y \mid x, \tau) } \left[ y \right]
\end{equation}
where $ \bar y^*_{y \mid x, \tau} = \mathbb{E} _{y \sim p(y \mid x, \tau) } \left[ y \right]$ and $l$ is the squared loss $l(\hat y, y) = (\hat y - y)^2$.
\end{theorem}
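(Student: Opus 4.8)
The plan is to reduce the optimization to a pointwise minimization and then invoke the standard fact that the conditional mean minimizes mean squared error. First I would use the tower property to factor the data distribution as $p(x,y\mid\tau)=p(x\mid\tau)\,p(y\mid x,\tau)$, rewriting the objective in \ref{ml_meta_sl} as
$$R_{ML}(A,h)=\mathbb{E}_{\tau\sim p(\tau)}\,\mathbb{E}_{x\sim p(x\mid\tau)}\,\mathbb{E}_{y\sim p(y\mid x,\tau)}\left[\big(A(h,\tau)(x)-y\big)^2\right].$$
The crucial modeling assumption I would state explicitly is that $A(h,\cdot)$ ranges over all measurable decision rules, so that for each fixed pair $(\tau,x)$ the value $\hat y:=A(h,\tau)(x)\in\mathbb{R}$ may be chosen freely. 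Because the outer expectations over $\tau$ and $x$ integrate a nonnegative weighting against the inner risk, minimizing the inner expectation pointwise for every $(\tau,x)$ simultaneously minimizes the whole objective.

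Second, I would carry out the pointwise step via the bias--variance identity. For fixed $\tau$ and $x$, writing $\mu:=\mathbb{E}_{y\sim p(y\mid x,\tau)}[y]$, I would expand
$$\mathbb{E}_{y\mid x,\tau}\left[(\hat y-y)^2\right]=(\hat y-\mu)^2+\mathrm{Var}(y\mid x,\tau),$$
which follows by adding and subtracting $\mu$ inside the square and noting the cross term vanishes since $\mathbb{E}_{y\mid x,\tau}[y-\mu]=0$. The variance term is independent of $\hat y$, so the unique minimizer is $\hat y=\mu$, giving $A(h,\tau)(x)=\bar y^*_{y\mid x,\tau}$ as claimed.

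The part requiring the most care is the reduction to pointwise optimization: it is valid precisely because we place no restriction on the realizable function class for the meta-learner, so the argmin can be attained independently at each $(\tau,x)$. I expect this to be the main conceptual obstacle rather than a computational one — the algebra is elementary, but the conclusion holds only in the idealized regime where $A$ and $h$ are expressive enough to represent the task-conditional mean. I would flag that in a constrained hypothesis class the minimizer would instead be the $L^2$ projection of $\bar y^*_{y\mid x,\tau}$ onto that class, which sets up the paper's intended contrast with the supervised rule of \ref{sl_meta_sl} that cannot condition on $\tau$ and is therefore forced to average the task-conditional means across $p(\tau)$.
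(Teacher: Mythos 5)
Your proof is correct and follows essentially the same route as the paper's: both reduce the problem to a pointwise minimization over each pair $(x,\tau)$ (which is exactly the step that distinguishes the meta-learned rule from the supervised one) and then identify the conditional mean as the minimizer. The only difference is cosmetic --- you complete the square via the bias--variance identity, whereas the paper sets the derivative with respect to $A(h,\tau)(x)$ to zero; these are interchangeable elementary arguments, though yours has the minor advantage of exhibiting uniqueness and making the expressivity assumption on $A$ explicit.
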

\begin{proof}
The proof is the same as the standard decision rule textbook proof but instead of minimizing it point-wise w.r.t. $x$ we minimize it point-wise w.r.t. $(x, \tau)$.
In particular, we have:
$$ R_{ML}(A, h) = \mathbb E_{\tau \sim p(\tau)} \mathbb E_{x,y \sim p(x, y \mid \tau) } \left[ l( A(h, \tau)(x), y) \right] $$
$$ \min_{A, h} \mathbb E_{\tau \sim p(\tau)} \mathbb E_{x \sim p(x \mid \tau)} \mathbb E_{y \sim p(y \mid x, \tau) } \left[ ( A(h, \tau)(x) - y)^2 \right] $$
without loss of generality (WLOG) and for clarity of exposition consider the special case for discrete variables:
$$ \min_{A, h} \sum_{\tau} p(\tau) \sum_{x} p(x \mid \tau) \mathbb E_{y \sim p(y \mid x, \tau) } \left[ ( A(h, \tau)(x) - y)^2 \right] $$
At this point we notice we can minimize the above point-wise w.r.t $(x,\tau)$ and ignore $h$.
To do that, take the derivative of $R(A, h)$ with respect to $A(h, \tau)(x)$ because that $A(h, \tau)(x) \in \mathbb R$ and set it to zero:
$$ \frac{d  }{d A(h,\tau)(x)} \mathbb E_{y \sim p(y \mid x, \tau) } \left[ ( A(h, \tau)(x) - y)^2 \right] = 0 $$
$$ \mathbb E_{y \sim p(y \mid x, \tau) } \left[ ( A(h, \tau)(x) - y) \right] = 0 $$
$$ \mathbb E_{y \sim p(y \mid x, \tau) } \left[ ( A(h, \tau)(x) \right] =  \mathbb E_{y \sim p(y \mid x, \tau) } \left[ y \right] $$
$$ A(h, \tau)(x) =  \mathbb E_{y \sim p(y \mid x, \tau) } \left[ y \right] =  \bar y^*_{y \mid x, \tau}$$
as desired.
\end{proof}
\begin{corollary}
For a monolithic meta-learner defined in section \ref{meta_sl_problem_setup} the solution to the meta supervised learning problem is the same as in equation \ref{ml_is_exp_y_eq} for the squared loss $l(\hat y, y) = (\hat y - y)^2$
i.e. $g(\tau, x) = \bar y^*_{y \mid x, \tau} = \mathbb{E} _{y \sim p(y \mid x, \tau) } \left[ y \right]$.
\end{corollary}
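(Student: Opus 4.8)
The plan is to observe that the monolithic meta-learner's risk $R_{Mono}(g)$ in equation \ref{mono_meta_sl} has exactly the same structure as the meta-learned risk $R_{ML}(A,h)$ in equation \ref{ml_meta_sl}, with the single unconstrained function $g(x,\tau)$ playing the role that $A(h,\tau)(x)$ played in the theorem. In both formulations the object being optimized is a real-valued function of the pair $(x,\tau)$, and the decisive fact is that a \emph{monolithic} $g$ is unconstrained, so its output value at one $(x,\tau)$ can be chosen independently of its value anywhere else. This is precisely the property that lets the argument of Theorem \ref{proof_ml_is_exp_y_eq} transfer essentially verbatim.

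Concretely, first I would expand the outer expectations over $\tau$ and $x$ (in the discrete case as a double sum weighted by the nonnegative factors $p(\tau)$ and $p(x \mid \tau)$), leaving the inner expectation over $y \sim p(y \mid x, \tau)$ of the squared loss $(g(x,\tau) - y)^2$. Since those weights are nonnegative and multiply terms that depend on $g$ only through its value at the corresponding $(x,\tau)$, minimizing the whole sum decouples into minimizing each inner expectation point-wise in $(x,\tau)$.

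Next I would differentiate the inner term $\mathbb{E}_{y \sim p(y \mid x,\tau)}[(g(x,\tau)-y)^2]$ with respect to the scalar $g(x,\tau)$, set the derivative to zero, and solve, exactly as in the theorem's computation. This yields $g(x,\tau) = \mathbb{E}_{y \sim p(y \mid x,\tau)}[y] = \bar{y}^*_{y \mid x,\tau}$, which is the claimed minimizer and coincides with equation \ref{ml_is_exp_y_eq}, establishing that the monolithic and meta-learned optima agree under the squared loss.

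I do not anticipate a genuine obstacle, since the corollary is strictly simpler than the theorem: the monolithic formulation drops the adaptation rule $A$ and the auxiliary function $h$, leaving a direct function of $(x,\tau)$. The only step requiring care is the justification of point-wise minimization, namely that the monolithic $g$ is expressive enough to attain its optimum independently at each $(x,\tau)$ so that no coupling across inputs prevents reaching the Bayes-optimal conditional mean. Once this is noted, the result follows immediately from the theorem's derivation.
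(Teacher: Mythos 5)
Your proposal is correct and follows essentially the same route as the paper: the paper's proof simply substitutes $g(\tau, x)$ for $A(h,\tau)(x)$ and invokes Theorem \ref{proof_ml_is_exp_y_eq}, which is exactly the observation you make (you merely spell out the point-wise minimization and differentiation steps again rather than citing them). No gap.
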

\begin{proof}
Proof is trivial, replace $A(h,\tau)(x)$ with $g(\tau, x)$ since $h$ is not used.
In this case, there is no difference with having an adaptation rule $A$ equipped with another function $h$ and a monolithic meta-learner $g$.
\end{proof}

\begin{theorem}
The minimizer to equation \ref{sl_meta_sl}:
\begin{equation}\label{sl_soln}
    f(x) = \mathbb E_{\tau \sim p(\tau \mid x) } \left[ \bar y^*_{y \mid x, \tau } \right]
\end{equation}
where $ \bar y^*_{y \mid x, \tau} = \mathbb{E} _{y \sim p(y \mid x, \tau) } \left[ y \right]$ and $l$ is the squared loss $l(\hat y, y) = (\hat y - y)^2$.
\end{theorem}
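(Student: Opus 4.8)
The plan is to mirror the proof of Theorem~\ref{proof_ml_is_exp_y_eq}, but with one essential structural difference: the supervised-learned rule $f$ receives only $x$ as input and is \emph{blind} to the task $\tau$. Consequently, the point-wise minimization cannot be carried out with respect to the pair $(x,\tau)$ as before; it must be carried out with respect to $x$ alone, which forces us to first marginalize the task out of the objective.

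First I would rewrite the nested expectations in \ref{sl_meta_sl} as a single expectation over the joint triple $(\tau, x, y)$ whose density factors as $p(\tau)\,p(x, y \mid \tau) = p(\tau, x, y)$. The key move is then to re-factor this joint density by conditioning on $x$ instead of on $\tau$, namely $p(\tau, x, y) = p(x)\,p(\tau, y \mid x)$, which lets me pull $x$ to the outermost position:
\begin{equation}
    R_{SL}(f) = \mathbb E_{x \sim p(x)}\, \mathbb E_{\tau, y \sim p(\tau, y \mid x)} \left[ (f(x) - y)^2 \right].
\end{equation}
For the discrete case this is simply a reordering of the summations; in general it is an application of Fubini's theorem together with Bayes' rule. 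This re-indexing of the measure is the step I expect to be the main obstacle, since it is precisely where the posterior over tasks $p(\tau \mid x)$ is implicitly introduced, and one must be careful that $f(x)$ does not depend on $\tau$ so that it survives as a constant inside the inner expectation.

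Next, since $f$ depends only on $x$, I would minimize the inner expression point-wise in $x$ exactly as in the previous theorem: differentiate the inner expectation with respect to the scalar $f(x)$, set it to zero, and solve, obtaining
\begin{equation}
    f(x) = \mathbb E_{\tau, y \sim p(\tau, y \mid x)} \left[ y \right].
\end{equation}
Finally I would unfold this joint conditional expectation using the tower property, decomposing $p(\tau, y \mid x) = p(\tau \mid x)\,p(y \mid x, \tau)$, so that
\begin{equation}
    f(x) = \mathbb E_{\tau \sim p(\tau \mid x)}\, \mathbb E_{y \sim p(y \mid x, \tau)} \left[ y \right] = \mathbb E_{\tau \sim p(\tau \mid x)} \left[ \bar y^*_{y \mid x, \tau} \right],
\end{equation}
which is the claimed minimizer. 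The contrast with \ref{ml_is_exp_y_eq} is then the conceptual payoff: the meta-learner returns $\bar y^*_{y \mid x, \tau}$ for the \emph{identified} task, whereas the task-blind supervised rule can only return its posterior average over tasks given the input.
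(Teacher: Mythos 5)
Your proposal is correct and follows essentially the same route as the paper's proof: point-wise minimization in $x$ of the squared loss, with the task marginalized out, yielding the posterior-weighted conditional mean $\mathbb E_{\tau \sim p(\tau \mid x)}\left[\bar y^*_{y \mid x, \tau}\right]$. The only difference is cosmetic -- you apply the Bayes re-factoring $p(\tau)p(x\mid\tau)/p(x) = p(\tau\mid x)$ before differentiating, whereas the paper differentiates first and then carries out the same density manipulation to identify the normalized weight as the task posterior.
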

\begin{proof}
WLOG, consider the minimizer of equation \ref{sl_meta_sl} in the discrete case.
In particular, we have:
$$ R_{SL}(f) = \mathbb E_{\tau \sim p(\tau)} \mathbb E_{x,y \sim p(x, y \mid \tau) } \left[ l(f(x), y) \right] $$
$$ \min_{f} \mathbb E_{\tau \sim p(\tau)} \mathbb E_{x \sim p(x \mid \tau)} \mathbb E_{y \sim p(y \mid x, \tau) } \left[ ( f(x) - y)^2 \right] $$
$$ \min_{f} \sum_x \mathbb E_{\tau \sim p(\tau)} p(x \mid \tau) \mathbb E_{y \sim p(y \mid x, \tau) } \left[ ( f(x) - y)^2 \right] $$
Note we can minimize the above point-wise w.r.t. $x$ only (and not also w.r.t. $\tau$ as we did in proof \ref{proof_ml_is_exp_y_eq}).
Thus, we have want:
$$ f(x) = \min_{ f(x) \in \mathbb R } \mathbb E_{\tau \sim p(\tau)} p(x \mid \tau) \mathbb E_{y \sim p(y \mid x, \tau) } \left[ ( f(x) - y)^2 \right] $$
at this point it is interesting to observe the disadvantage of supervised learning methods with fixed functions without dependence on the task is that they are forced to consider all task $\tau$ at once. 
We proceed to take derivatives as in proof \ref{proof_ml_is_exp_y_eq} but with this objective:
$$ \mathbb E_{\tau \sim p(\tau)} p(x \mid \tau) \mathbb E_{y \sim p(y \mid x, \tau) } \left[ ( f(x) - y)^2 \right] $$
$$ \frac{d  }{d f(x) } \mathbb E_{\tau \sim p(\tau)} p(x \mid \tau) \mathbb E_{y \sim p(y \mid x, \tau) } \left[ ( f(x) - y)^2 \right] = 0 $$
$$ \mathbb E_{\tau \sim p(\tau)} p(x \mid \tau) \mathbb E_{y \sim p(y \mid x, \tau) } \left[ f(x) \right] = \mathbb E_{\tau \sim p(\tau)} p(x \mid \tau) \mathbb E_{y \sim p(y \mid x, \tau) } \left[ y \right] $$
$$ f(x) \mathbb E_{\tau \sim p(\tau)} \left[  p(x \mid \tau) \right] = \mathbb E_{\tau \sim p(\tau)} p(x \mid \tau) \mathbb E_{y \sim p(y \mid x, \tau) } \left[ y \right] $$
\begin{equation}\label{weighted_eq}
    f(x)  = \mathbb E_{\tau \sim p(\tau)} \left[ \frac{ p(x \mid \tau) }{\mathbb E_{\tau \sim p(\tau)} \left[  p(x \mid \tau) \right]} \mathbb E_{y \sim p(y \mid x, \tau) } \left[ y \right] \right] 
\end{equation}
We proceed by noticing that $ \mathbb E_{\tau \sim p(\tau)} \left[  p(x \mid \tau) \right] = p(x) $, thus:
$$ f(x)  = \mathbb E_{\tau \sim p(\tau)} \left[ \frac{ p(x \mid \tau) }{p(x)} \mathbb E_{y \sim p(y \mid x, \tau) } \left[ y \right] \right] $$
$$ f(x)  = \sum_{\tau}  p(\tau) \frac{ p(x \mid \tau) }{p(x)}  \left[ \mathbb E_{y \sim p(y \mid x, \tau) } \left[ y \right] \right] $$
$$ f(x)  = \sum_{\tau}  \frac{p(\tau)}{p(x)} \frac{ p(x, \tau) }{p(\tau)}  \left[ \mathbb E_{y \sim p(y \mid x, \tau) } \left[ y \right] \right] $$
$$ f(x)  = \sum_{\tau} \frac{ p(x, \tau) }{p(x)}  \left[ \mathbb E_{y \sim p(y \mid x, \tau) } \left[ y \right] \right] $$
$$ f(x)  = \sum_{\tau} p(x \mid \tau)  \left[ \mathbb E_{y \sim p(y \mid x, \tau) } \left[ y \right] \right] $$
$$ f(x)  = \mathbb E_{\tau \sim p(x \mid \tau) } \left[ \mathbb E_{y \sim p(y \mid x, \tau) } y \right] $$
$$ f(x)  = \mathbb E_{\tau \sim p(x \mid \tau) } \left[ \bar y^*_{y \mid x, \tau} \right] $$
as required by the rightmost RHS of equation \ref{sl_soln}.
\end{proof}

\begin{theorem}
The minimizer in equation \ref{sl_soln} reduces to an expectation only over w.r.t. $p(\tau)$ of $\bar y^*_{y \mid x, \tau}$ under benchmarks that are balanced.
Formally
\begin{equation} \label{sl_soln_conditional_avg_over_tasks}
    f(x) = \mathbb E_{\tau \sim p(\tau) } \left[ \bar y^*_{y \mid x, \tau } \right] = \mathbb E_{\tau \sim p(\tau) } \left[ \mathbb{E} _{y \sim p(y \mid x, \tau) } \left[ y \right] \right]
\end{equation}
where $ \bar y^*_{y \mid x, \tau} = \mathbb{E} _{y \sim p(y \mid x, \tau) } \left[ y \right]$ and under assumption A1: $p(x \mid \tau)$ is a constant, i.e. $ p(x \mid \tau) = k_{XT} \in \mathbb R,
\forall x \in X, \forall \tau \in T$ and $l$ is the squared loss $l(\hat y, y) = (\hat y - y)^2$.
\end{theorem}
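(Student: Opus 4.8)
The plan is to resume from the importance-weighted form of the supervised-learning minimizer that was already derived as an intermediate step in the proof of equation \ref{sl_soln}, namely equation \ref{weighted_eq}, rather than from the posterior form \ref{sl_soln} itself, since assumption A1 acts most transparently on the reweighting factor appearing there. Recall that equation \ref{weighted_eq} writes $f(x)$ as an expectation over the task prior $p(\tau)$ of the conditional mean $\bar y^*_{y \mid x, \tau}$, modulated by the weight $p(x \mid \tau)/\mathbb E_{\tau \sim p(\tau)}[p(x \mid \tau)]$.

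First I would substitute the balancedness hypothesis A1, $p(x \mid \tau) = k_{XT}$, into both the numerator and the denominator of this weight. The numerator becomes the constant $k_{XT}$, and the denominator becomes $\mathbb E_{\tau \sim p(\tau)}[k_{XT}] = k_{XT}\,\mathbb E_{\tau \sim p(\tau)}[1] = k_{XT}$, using only that $p(\tau)$ is a normalized distribution. Hence the reweighting factor collapses to $k_{XT}/k_{XT} = 1$ for every $\tau$, and equation \ref{weighted_eq} reduces directly to $f(x) = \mathbb E_{\tau \sim p(\tau)}[\bar y^*_{y \mid x, \tau}]$, which is exactly the claimed identity \ref{sl_soln_conditional_avg_over_tasks}.

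An equivalent and more conceptual route is to argue from \ref{sl_soln} directly: a task-independent conditional $p(x \mid \tau)$ forces the marginal $p(x) = \mathbb E_{\tau \sim p(\tau)}[p(x \mid \tau)] = k_{XT}$, so that $p(\tau \mid x) = p(x \mid \tau) p(\tau)/p(x) = p(\tau)$, i.e. $x$ and $\tau$ are independent and the input carries no information about the task. Substituting $p(\tau \mid x) = p(\tau)$ into \ref{sl_soln} gives the same conclusion. I would present the first route as the main argument and mention this second one as the interpretation: balancedness makes the supervised-learning posterior over tasks collapse onto the prior.

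The computation is elementary, so there is no genuine technical obstacle; the only point requiring care is the correct reading of A1. What is actually used is that $p(x \mid \tau)$ does not depend on $\tau$ for each fixed $x$ (the constancy in $x$ asserted in A1 is stronger than needed), and in a continuous setting a genuinely constant conditional density presupposes a domain of finite measure so that normalization holds. I would therefore phrase the argument in the discrete setting used throughout this section, where such normalization issues do not arise, and simply invoke $\sum_{\tau} p(\tau) = 1$ at the denominator step.
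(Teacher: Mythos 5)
Your proposal is correct and is essentially the paper's own argument: the paper also starts from \ref{sl_soln}, applies Bayes' rule to get the weight $p(x \mid \tau)/p(x)$, and cancels it via $p(x) = \sum_{\tau} p(\tau)\, p(x \mid \tau) = p(x \mid \tau)$ under A1 -- which is exactly the cancellation in both of your routes (your ``second route'' is the paper's proof verbatim, and your first differs only in starting one line earlier at \ref{weighted_eq}). Your side remark that only $\tau$-independence of $p(x\mid\tau)$ at fixed $x$ is needed, not constancy in $x$, is a valid minor sharpening.
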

\begin{proof}
Recall equation \ref{sl_soln}:
$$ f(x) = \mathbb E_{\tau \sim p(\tau \mid x) } \left[ \bar y^*_{y \mid x, \tau } \right] $$
due to Bayes's rule we have $p(\tau \mid x) = \frac{p(\tau)p(x \mid \tau)}{p(x)} $ and equation \ref{sl_soln} can be re-written as follows:
$$ f(x) = \mathbb E_{\tau \sim p(\tau) } \left[ \frac{p(x \mid \tau) }{ p(x) } \bar y^*_{y \mid x, \tau } \right] $$
under assumption A1 we have that $p(x \mid \tau)$ does not depend on as a function of $x$ or $\tau$.
Thus, we have:
$$ p(x) = \sum_{\tau} p(\tau) p(x \mid \tau) = p(x \mid \tau) \sum_{\tau} p(\tau) = p(x \mid \tau) $$
Thus we have:
$$ f(x) = \mathbb E_{\tau \sim p(\tau) } \left[ \frac{p(x) }{ p(x) } \bar y^*_{y \mid x, \tau } \right] $$
$$ f(x) = \mathbb E_{\tau \sim p(\tau) } \left[ \bar y^*_{y \mid x, \tau } \right] $$
as required.
\end{proof}

\begin{remark}
Note that assumption A1 holds for the common MiniImagenet few-shot learning data set, where $p(x \mid \tau) =  \frac{1}{600}$.
\end{remark}

\begin{remark}
In addition, because all classes are equally likely (e.g. $p(class) = \frac{1}{64}$ for the meta-train set) we have $p(\tau)$ is the same constant independent of the task $\tau$.
Proof in the appendix, lemma \ref{lemma_tasks_constant_prob}.
\end{remark}

\begin{theorem}\label{lemma_tasks_constant_prob}
If the tasks are equally likely, then equation \ref{sl_soln_conditional_avg_over_tasks} becomes an average over conditional predictions over all tasks.
Formally, if $p( \tau ) = \frac{1}{T}$ then equation \ref{sl_soln_conditional_avg_over_tasks} becomes:
\begin{equation} 
    f(x) = \frac{1}{T} \sum_{\tau} \bar y^*_{y \mid x, \tau }
\end{equation}
under the squared loss $l(\hat y, y) = (\hat y - y)^2$.
\end{theorem}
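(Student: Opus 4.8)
The plan is to start directly from the reduced form of the supervised-learning minimizer already established in equation \ref{sl_soln_conditional_avg_over_tasks}, namely $f(x) = \mathbb E_{\tau \sim p(\tau)}\left[ \bar y^*_{y \mid x, \tau} \right]$, and simply specialize the task expectation to the uniform prior. Since the theorem concerns a benchmark with finitely many tasks indexed so that there are $T$ of them, I would first rewrite the expectation over $p(\tau)$ as the corresponding discrete sum $\sum_{\tau} p(\tau)\, \bar y^*_{y \mid x, \tau}$, which is legitimate in exactly the discrete setting used throughout the preceding theorems.

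Next I would substitute the hypothesis $p(\tau) = \frac{1}{T}$ into that sum to obtain $\sum_{\tau} \frac{1}{T}\, \bar y^*_{y \mid x, \tau}$, and then, because $\frac{1}{T}$ is constant in $\tau$, pull it outside the summation to get $\frac{1}{T}\sum_{\tau} \bar y^*_{y \mid x, \tau}$, which is precisely the claimed expression. If a fully explicit form is desired, I would additionally unfold $\bar y^*_{y \mid x, \tau} = \mathbb E_{y \sim p(y \mid x, \tau)}\left[ y \right]$, recovering the conditional average of targets over all tasks.

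The argument is essentially a one-line specialization, so there is no genuine obstacle; the only point worth stating carefully is the standing assumption that the benchmark contains finitely many tasks, which is what makes the uniform prior $p(\tau) = \frac{1}{T}$ well defined and the expectation collapse into a finite average. I would also emphasize that this result chains on top of the balanced-benchmark assumption A1 used to derive equation \ref{sl_soln_conditional_avg_over_tasks}, so that no hypothesis beyond the uniformity of $p(\tau)$ is introduced here; indeed the remark preceding the statement already justifies that $p(\tau)$ is constant whenever all classes are equally likely, making the hypothesis consistent with the MiniImagenet setting. Thus the proof amounts to substituting the uniform prior and factoring out the constant, and I expect it to be short and routine.
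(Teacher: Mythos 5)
Your proposal is correct and matches the paper's own proof, which likewise just substitutes the uniform prior $p(\tau) = \frac{1}{T}$ into $f(x) = \mathbb E_{\tau \sim p(\tau)}\left[ \bar y^*_{y \mid x, \tau} \right]$ and reads off the finite average. The extra care you take in noting the finiteness of the task set and the chaining on assumption A1 is sound but not a departure from the paper's argument.
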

\begin{proof}
Since $f(x) = \mathbb E_{\tau \sim p(\tau) } \left[ \bar y^*_{y \mid x, \tau } \right]$ then, plugging $p( \tau ) = \frac{1}{T}$ completes proof.
\end{proof}

\begin{remark} \label{remark_optimal_sl_vs_ml}
It is interesting to note that without adaptation or dependence on the task $\tau$ being solved, the supervised learned meta-learner is suboptimal compared to the meta-learned solution.
The proof is simple, and it follows because the meta-learned decision rule was chosen to minimize each term individually, but the supervised learned decision is not of that form.
Proof in appendix \ref{optimal_ml_but_sl_is_not}.
Unfortunately, note that this does not necessarily apply to previous work \citep{rfs}.
\end{remark}

\begin{remark}
Note that remark \ref{remark_optimal_sl_vs_ml} does not apply to work \citep{rfs} because that work does depend on a task $\tau$ during meta-test time by adapting the final layer even if the representation is fixed.
\end{remark}

\begin{remark}\label{optimal_ml_but_sl_is_not}
The supervised learning decision rule is suboptimal compared to the meta-learned decision rule.
\end{remark}

\subsection{The supervised Learning Solution is equivalent to the Meta-Learning solution when there is low task diversity}

\textbf{Sketch argument: } 
The main idea is that because all tasks are very similar (task diversity is low) -- it essentially means that $\tau$ is not truly an input to the adaptation rule or monolithic meta-learner).
Equivalently, the problem is essentially a single task problem, so the task is implicitly an input to any method used.
Therefore, since the task conditioning does not exist, then the optimization problem is the same for the meta-learned solution and when there is a fixed supervised learning feature extractor.

\begin{theorem}
Assume $\tau_1 = \tau_2$ for any tasks in $T$ and the data sets are balanced (i.e. same number of images $x$ for each task).
Then we have the meta-learned solution is the same as the supervised learning solution with shared embeddings:
$f_{sl}(x) = A(f_{ml}, \tau)(x)$.
\end{theorem}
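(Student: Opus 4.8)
The plan is to stitch together the two closed-form minimizers already established in this section and then exploit the degeneracy hypothesis to collapse the single expression in which they differ. Recall that the meta-learned and supervised-learned optima agree on every per-task conditional mean $\bar y^*_{y \mid x, \tau}$; they disagree only in how these are aggregated over tasks, with the supervised learner forced to average over $p(\tau)$ while the meta-learner keeps each term separate. When all tasks coincide, this averaging becomes trivial and the two rules must agree.

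First I would invoke Theorem~\ref{proof_ml_is_exp_y_eq}, which gives the meta-learned optimum $A(f_{ml}, \tau)(x) = \bar y^*_{y \mid x, \tau} = \mathbb{E}_{y \sim p(y \mid x, \tau)}[y]$, with $f_{ml}$ playing the role of the function approximator $h$. Next, because the datasets are balanced, assumption A1 holds, so the supervised-learning optimum reduces to its simplified form (equation~\ref{sl_soln_conditional_avg_over_tasks}), namely $f_{sl}(x) = \mathbb{E}_{\tau \sim p(\tau)}[\bar y^*_{y \mid x, \tau}]$.

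The crux is the hypothesis $\tau_1 = \tau_2$ for all tasks. I would read task equality as equality of the underlying joint distributions $p(x, y \mid \tau)$, so that the posterior $p(y \mid x, \tau)$ --- and hence the conditional mean $\bar y^*_{y \mid x, \tau}$ --- is independent of $\tau$; write the common value as $\bar y^*_{y \mid x}$. Since the integrand no longer depends on $\tau$, the expectation over $p(\tau)$ simply returns this constant, giving $f_{sl}(x) = \bar y^*_{y \mid x} = A(f_{ml}, \tau)(x)$, which is exactly the claim.

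The hard part is conceptual rather than algebraic: pinning down precisely what ``$\tau_1 = \tau_2$ for any tasks'' should mean and justifying that it forces $\tau$-independence of the posterior rather than merely matching some marginal. I would make this rigorous by defining a task as its joint law $p(x, y \mid \tau)$, consistent with the problem set-up in Section~\ref{meta_sl_problem_setup}, so that identical tasks necessarily share identical conditionals. Once that is fixed, every remaining step is a routine collapse of an expectation of a $\tau$-constant, and the balanced-dataset hypothesis plays only the supporting role of licensing the simplified form of $f_{sl}$ --- equivalently, it forces the importance weight $p(x \mid \tau)/p(x)$ appearing in the general supervised optimum (equation~\ref{sl_soln}) to equal one.
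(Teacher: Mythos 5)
Your proposal is correct and follows essentially the same route as the paper: both arguments reduce task equality to equality of the joint laws $p(x,y\mid\tau)=p(x,y)$, observe that the per-task conditional mean $\bar y^*_{y\mid x,\tau}$ then no longer depends on $\tau$, and conclude that the supervised optimum's expectation over $p(\tau)$ collapses onto the meta-learned optimum. Your added remark that task equality already forces $p(x\mid\tau)/p(x)=1$ (making the balancedness hypothesis redundant) is a fair refinement, but it does not change the substance of the argument.
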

\begin{proof}
Consider the optimization problem, for supervised learning:
$$ \min_{A, h} \mathbb E_{\tau \sim p(\tau)} \mathbb E_{x \sim p(x \mid \tau)} \mathbb E_{y \sim p(y \mid x, \tau) } \left[ ( A(h, \tau)(x) - y)^2 \right] $$
If every pair of tasks is equal, it means their distributions are equal $p(x,y \mid \tau)  = p(x, y)$ (meaning $\tau$ can be ignored).
Thus, the solution to the supervised learning problem is:
$f_{sl}(x) = \mathbb E_{\tau} \mathbb E_{p(x, y)}[y] = \mathbb E_{p(x, y)}[y] = y^*_{\mid x}$.
Now for the meta-learning problem we have:
$A(f_{ml}, \tau)(x) = y^*_{ \mid x, \tau} = \mathbb E_{y \sim p(y \mid x, \tau) }[y]$
but due to every pair of tasks being equal means $p(x,y \mid \tau) = p(x, y)$ (i.e. all task share the same distributions) we have:
$A(f_{ml}, \tau)(x) = \mathbb E_{y \sim p(y \mid x, \tau)}[y] = \mathbb E_{y \sim p(y \mid x}[y] = y^*_{\mid x}$
which is the same as the solution as in $f_{sl}$.
Thus $f_{sl}(x) = A(f_{ml}, \tau)(x)$.
\end{proof}

\begin{remark}
Proofs were presented in the discrete case clarity, but it is trivial to expand them to the continuous case -- e.g., using integrals instead of summations.
\end{remark}



\section{Summary of Compute Required}\label{compute}

We used an internal compute cluster with wide varied of GPUs.
We used Titan X GPUs for most five layer CNN experiments.
We used A40 and dgx-A100 GPUs for Resnet12 experiments, with 48 GB and 40 GB GPU memory respectively.
We did notice that the Resnet12 architecture we used from previous work \citep{rfs} required more memory than Resnet18 and Resnet34 used in Task2Vec \citep{task2vec}.
By requiring more memory, we mean we did not have many memory out of bound issues with Resnet18/Resnet34 but did have memory issues with Resnet12.
In addition, our episodic meta-learning training for MAML used Learn2Learn's \citep{l2l} distributed training to speed up experiments. 
Experiments took 1-2 weeks with MAML in a single GPU to potentially 2-3 days with multiple GPUs (we used 2, 4 to 8 GPUs depending on availability).
For synthetic experiments we used Titan X GPUs with 16GB of GPU memory. Experiments took around 1-2 days on average with a single GPU.
For more precision check the experimental details section \ref{experimental_details}.



\end{document}